\documentclass[12pt,a4paper]{article}

\usepackage[margin=1in]{geometry}
\usepackage{amsmath,amssymb,amsthm}
\usepackage{graphicx}
\usepackage{hyperref}
\usepackage{natbib}
\usepackage{booktabs} 
\usepackage{tikz}
\usetikzlibrary{positioning, arrows.meta, calc, shapes.geometric} 
\usepackage{caption}          % 支持 \captionsetup
\usepackage{algorithm}        % 定义 algorithm 计数器与浮动环境
\usepackage{algpseudocode}    % 支持 \State, \For, \While, \Comment, \Statex 等算法语法
\usepackage{array}
\usepackage{tabularx}  % 支持 tabularx 环境和 X 列宽自适应
\usepackage{booktabs}  % 支持 \toprule, \midrule, \bottomrule, \addlinespace
\usepackage{amsmath}
\usepackage{amssymb}
\usepackage{graphicx} 
\newtheorem{theorem}{Theorem}[section]
\newtheorem{proposition}{Proposition}[section]

\newtheorem{definition}{Definition}[section]
\newtheorem{remark}{Remark}[section]
\usepackage{booktabs}
\usepackage{array}
\usepackage{wrapfig}
\usepackage{float}
\usepackage{multirow}
\usepackage{pifont}
\DeclareRobustCommand{\cmark}{\text{\ding{52}}} % ✓
\DeclareRobustCommand{\xmark}{\text{\ding{55}}} % ✗

\hypersetup{
    hidelinks,
    pdfauthor={Anonymous},
    pdftitle={Your Paper Title},
    pdfsubject={Submission},
    pdfkeywords={keyword1, keyword2}
}

\title{\textbf{OVA-IB: One vs All Information Bottleneck for Multi-Modal Alignment}}
\author{
\begin{tabular}{ccc}
Tianchao Li$^{1}$ 
& Shujian Yu$^{2,3}$ 
& Xinrui Zu$^{2}$ \\
Zhaolong Wei$^{6}$ 
& Jeremy Gummeson$^{6}$ 
& Jack C.P. Cheng$^{1}$ \\
\multicolumn{3}{c}{Robert Jenssen$^{3,4,5}$} \\
\end{tabular}
\\[1.2em]
$^{1}$Hong Kong University of Science and Technology, Hong Kong \\
$^{2}$Vrije Universiteit Amsterdam, Netherlands\\
$^{3}$UiT -- The Arctic University of Norway, Norway \\
$^{4}$University of Copenhagen, Denmark \\
$^{5}$Norwegian Computing Center, Norway \\
$^{6}$University of Massachusetts Amherst, USA
}
\date{}

\begin{document}
\maketitle
\begin{abstract}
\noindent Contrastive learning is effective for aligning paired views or modalities, but alignment beyond two modalities remains non-trivial and comparatively underexplored. Pairwise CLIP-style losses decompose multi-modal alignment into independent two-way comparisons and therefore do not explicitly model higher-order dependencies among multiple modalities. Recent beyond-pairwise objectives approach this problem from statistical or geometric perspectives, but arbitrary-modality alignment still lacks a principled criterion for defining what each modality should preserve and compress relative to the others. We revisit arbitrary-modality alignment through the Information Bottleneck principle. In multi-modal learning, sufficiency should preserve information predictable from the remaining modalities, while minimality should compress modality-specific information not supported by them. This naturally leads to a One-vs-All view, where each modality is characterized with respect to the remaining modalities. We propose OVA-IB, an Information Bottleneck framework for arbitrary-modality alignment. OVA-IB optimizes a tractable One-vs-All contrastive lower bound for sufficiency connected to a Dual Total Correlation-style objective, uses a parameter-free geometry-aware projection score, and derives a tractable upper-bound regularizer for minimality by bounding each representation's dependence on its own input with representation distributions induced by the remaining modalities. Experiments on classification, regression, modality-agnostic evaluation, and cross-modal retrieval benchmarks demonstrate strong and robust performance.
\end{abstract}

\section{Introduction}
\label{sec:intro}
Contrastive Learning (CL) has become a central paradigm for self-supervised representation learning, as it enables models to learn transferable representations without relying on extensive human annotations. In the single-modality setting, different views of the same samples are typically generated through data augmentation, and the model is trained to bring positive views closer while separating representations from different samples. As a widely used objective for this purpose, InfoNCE \cite{oord2019representationlearningcontrastivepredictive, pmlr-v119-chen20j}  maximizes agreement between paired views and has been interpreted from both information-theoretic and geometric perspectives. From the information-theoretic perspective, InfoNCE maximizes a lower bound on mutual information between positive views \cite{10.5555/3495724.3496212}; from the geometric perspective, contrastive learning simultaneously promotes alignment between positive pairs and uniformity of representations on the hypersphere \cite{pmlr-v119-wang20k}. These two perspectives provide the foundation for understanding contrastive learning as both an information-preserving and geometry-shaping representation learning framework.

For two-modality representation learning, paired modalities provide natural positive views for contrastive alignment. A prominent example is CLIP \cite{pmlr-v139-radford21a}, which applies a symmetric InfoNCE objective to image--text pairs and learns a shared embedding space with strong transfer performance. However, this formulation is intrinsically pairwise. For three or more modalities, a common extension is to sum CLIP-style losses over all modality pairs. While simple and scalable, this strategy decomposes multi-modal alignment into independent two-modality comparisons and therefore fails to explicitly capture higher-order dependencies jointly supported by multiple modalities. This motivates alignment objectives that move beyond pairwise agreement. Recent beyond-pairwise methods address this limitation from complementary directions. Symile~\cite{saporta2024contrasting} models higher-order statistical dependence among an arbitrary number of modalities, while Gram~\cite{cicchetti2025gramian} and TRIANGLE~\cite{cicchetti2025a} introduce geometry-aware objectives in the shared representation space. These works provide important statistical and geometric formulations of beyond-pairwise alignment. However, alignment beyond two modalities remains comparatively underexplored, and it is still unclear how to define what each modality representation should preserve and compress when multiple remaining modalities are available.

To instantiate this principle, we propose OVA-IB, a One-vs-All Information Bottleneck framework for multi-modal alignment with an arbitrary number of modalities. The key idea is to define both sufficiency and minimality in a modality-wise manner: for each modality, the remaining modalities serve as the reference for determining what information should be preserved and what modality-specific variation should be compressed. Specifically, the sufficiency term encourages each modality representation to retain information predictable from the remaining modalities. We derive a tractable One-vs-All contrastive lower bound for this objective and show that it is connected to a Dual Total Correlation \cite{5392532} (DTC)-style dependence measure. To make the alignment geometry-aware, we score each modality by its projection onto the subspace spanned by the remaining modality embeddings. For minimality, we derive a tractable upper-bound regularizer that controls each representation's dependence on its own input using representation distributions induced by the remaining modalities.

Our contributions are summarized as follows:
\begin{itemize}
\item We propose OVA-IB, a One-vs-All Information Bottleneck framework for multi-modal alignment with an arbitrary number of modalities.
\item We derive a DTC-style sufficiency objective that aligns each modality with the remaining modalities as complementary evidence, and introduce a geometry-aware projection score that aligns each embedding with the subspace spanned by the remaining modalities.
\item We derive a tractable minimality regularizer that suppresses modality-specific nuisance information by bounding each representation's dependence on its own input using distributions induced by the remaining modalities.
\item Experiments on classification, regression, modality-agnostic evaluation, and cross-modal retrieval benchmarks demonstrate that OVA-IB achieves strong and robust performance.
\end{itemize}

\section{Related Work}
\label{sec:related works}
\subsection{Contrastive Learning for Multi-Modal Representation Learning}
Contrastive learning (CL) has been widely adopted for self-supervised representation learning by aligning positive views and separating negative samples \cite{oord2019representationlearningcontrastivepredictive,pmlr-v119-chen20j,10.5555/3495724.3496212,rusak2025infonce,pmlr-v119-wang20k,wu2025pca,pmlr-v139-zimmermann21a,luthra2025selfsupervised}. In multi-modal learning, different modalities of the same sample naturally define positive pairs, enabling contrastive objectives to learn shared information across modalities. CLIP \cite{pmlr-v139-radford21a} applies a symmetric InfoNCE to image-text pairs and has become a standard paradigm for two-modality alignment:
\begin{align}   
\mathcal{L}^{(1\rightarrow2)}_{\text{InfoNCE}}(\theta) &= -\frac{1}{N}\sum_{i=1}^N\log \frac{\exp\left( \text{sim}(\mathbf{z}^{(1)}_i, \mathbf{z}^{(2)}_i)/\tau \right)}{\sum_{k=1, k\ne i}^{N}  \exp\left( \text{sim}(\mathbf{z}^{(1)}_i, \mathbf{z}^{(2)}_k)/\tau \right)}, \\
    \mathcal{L}^{(1,2)}_{\text{CLIP}}(\theta)&= \frac{1}{2}(\mathcal{L}^{(1\rightarrow2)}_{\text{InfoNCE}}(\theta)+\mathcal{L}^{(2\rightarrow 1)}_{\text{InfoNCE}}(\theta)),
\end{align}
where $\mathbf{z}_i^{(m)}$ represents the representation of the $m$-th modality of the $i$-th sample, $\tau$ is a temperature parameter, and $\theta$ represents all the learnable parameters. For more than two modalities, a common extension is to apply CLIP-style objectives over all modality pairs \cite{tian2020contrastive, 10.5555/3540261.3542114, Girdhar2023ImageBindOE, chen2023vast, 10.5555/3495724.3495727, 9709934, 10721284, wang2025omnibind, Cho2025SynergyCLIPEC}.  In the simplest case, for three modalities, the pairwise CLIP loss corresponds to
\begin{align}
    \mathcal{L}^{(1,2, 3)}_{\text{CLIP}}(\theta) = \mathcal{L}^{(1,2)}_{\text{CLIP}}(\theta) + \mathcal{L}^{(1,3)}_{\text{CLIP}}(\theta) + \mathcal{L}^{(2,3)}_{\text{CLIP}}(\theta).
\end{align}
It reduces multi-modal alignment to a collection of independent pairwise comparisons and therefore does not explicitly capture higher-order dependencies among multiple modalities.

Compared with the extensive literature on pairwise multi-modal contrastive learning, alignment objectives for more than two modalities remain comparatively underexplored, with only a small number of recent works explicitly addressing this setting. Symile~\cite{saporta2024contrasting} extends contrastive learning by modeling higher-order statistical dependence among arbitrary modalities, while Gram~\cite{cicchetti2025gramian} and TRIANGLE~\cite{cicchetti2025a} introduce geometry-aware objectives based on the volume or area formed by the same-sample modality embeddings. These works provide important initial steps toward beyond-pairwise alignment from statistical and geometric perspectives. Complementary to them, we revisit arbitrary-modality alignment from the Information Bottleneck \cite{tishby2000informationbottleneckmethod} perspective and ask how sufficiency and minimality should be defined for each modality when multiple remaining modalities are available.

\subsection{Information Bottleneck for Multi-Modal Representation Learning}

Information Bottleneck (IB) \cite{tishby2000informationbottleneckmethod} provides an information-theoretic framework for learning representations that are sufficient for a target variable while remaining minimal with respect to the input. Given an input $\mathbf{x}$, a target $\mathbf{y}$, and a representation $\mathbf{t}$, the standard IB objective is formulated as
\begin{equation}
\max_\mathbf{t} \ I(\mathbf{y};\mathbf{t}) - \beta I(\mathbf{x};\mathbf{t}),
\end{equation}
where \(I(\cdot;\cdot)\) denotes mutual information and \(\beta>0\) controls the strength of compression. The first term encourages \(\mathbf{t}\) to preserve information relevant to \(\mathbf{y}\), while the second term discourages \(\mathbf{t}\) from retaining unnecessary information from \(\mathbf{x}\). This sufficiency--minimality trade-off provides a natural principle for multi-modal representation learning, where the goal is to preserve shared semantic information while suppressing modality-specific nuisance factors \cite{wu2025learning, 10.1109/TMM.2022.3171679,e28040445}.

The general idea of IB has recently been applied for two-modality alignment \cite{almudevar2025aligning}. Given two modalities ($\mathbf{x}^{(1)}, \mathbf{x}^{(2)}$) with their representations ($\mathbf{z}^{(1)}, \mathbf{z}^{(2)}$), the sufficiency for $\mathbf{x}^{(1)}$ can be approximated through cross-modal dependence, while the minimality can be upper-bounded through the dependence between $\mathbf{z}^{(1)}$ and its own input $\mathbf{x}^{(1)}$. This yields the IB-style objective:
\begin{equation}
\label{eq:two_modality}
\max_{\mathbf{z}^{(1)}, \mathbf{z}^{(2)}} \ I(\mathbf{z}^{(1)};\mathbf{z}^{(2)}) - \frac{\beta}{2} (I(\mathbf{z}^{(1)};\mathbf{x}^{(1)})+I(\mathbf{z}^{(2)};\mathbf{x}^{(2)})).
\end{equation}
This formulation is elegant. However, extending Eq. (\ref{eq:two_modality}) for an arbitrary number of modalities is a non-trivial task. This is because each modality is compared against multiple remaining modalities, and there is no unique paired counterpart as in the two-modality case.  Similarly, minimality specifies which information should be discarded relative to the evidence provided by the remaining modalities.

\section{One vs All Information Bottleneck for Multi-Modal Alignment}
\label{sec:methodology}

\subsection{Problem Setup and Multi-Modal IB Principle}

Let $\mathcal{X}^{(1)}, \dots, \mathcal{X}^{(M)}$ denote $M$ distinct modalities. We consider modality-specific encoders $f^{(m)}: \mathcal{X}^{(m)} \to \mathbb{R}^d$ and projection heads $g^{(m)}: \mathbb{R}^d \to \mathbb{R}^d$ for $m=1,\dots,M$. Given an input $\mathbf{x}^{(m)} \in \mathcal{X}^{(m)}$, the encoder produces a representation, which is mapped to the embedding ${\mathbf{z}}^{(m)} = g^{(m)}(f^{(m)}(\mathbf{x}^{(m)}))$ in the shared representation space. The primary objective of multi-modal representation learning is to optimize the encoders $f^{(m)}$ by aligning the projected embeddings $\{{\mathbf{z}}^{(m)}\}_{m=1}^M$ at the sample level. For simplicity, we denote by $[M] \backslash (m)$ the set of all indices except $m$, $\mathbf{z}^{[M]}=(\mathbf{z}^{(1)},\dots, \mathbf{z}^{(M)})$ represents representations of all $M$ modalities, and $\mathbf{z}^{[M]\backslash(m)}=(\mathbf{z}^{(1)},\dots, \mathbf{z}^{(m-1)}, \mathbf{z}^{(m+1)},\dots, \mathbf{z}^{(M)})$ represents representations of all $M$ modalities without $\mathbf{z}^{(m)}$. 

Our goal is to derive a multi-modal alignment objective from the Information Bottleneck principle. Following \cite{almudevar2025aligning}, we assume that each modality \(\mathbf{x}^{(m)}\) consists of an essence $\mathbf{y}^{(m)}$ and nuisance $\mathbf{n}^{(m)}$, where $\mathbf{y}^{(m)}$ contains all the relevant information in $\mathbf{x}^{(m)}$, while $\mathbf{n}^{(m)}$ represents the modality-specific noise in $\mathbf{x}^{(m)}$. Conceptually, a representation $\mathbf{z}^{(m)}$ is sufficient if it preserves information about $\mathbf{y}^{(m)}$, and minimal if it discards information about $\mathbf{n}^{(m)}$. This leads to the IB objective:
\begin{align}
\label{eq:concept_ib}
    \max_{\mathbf{z}^{(1)},\dots, \mathbf{z}^{(M)}} \frac{1}{M}\sum_{m=1}^M \left(I(\mathbf{z}^{(m)}; \mathbf{y}^{(m)}) - \beta I(\mathbf{z}^{(m)}; \mathbf{n}^{(m)})\right).
\end{align} 
The superscript \(m\) only indicates that the essence and nuisance components may differ across modalities; it does not imply that Eq.~(\ref{eq:concept_ib}) defines separate single-modality IB objectives.

\subsection{Multi-Modal Sufficiency}
We first derive the sufficiency objective for the $m$-th modality. From the IB objective in Eq. (\ref{eq:concept_ib}), sufficiency requires maximizing $I(\mathbf{z}^{(m)}; \mathbf{y}^{(m)})$.  In the two-modality setting, the essence of one modality can be characterized through its paired modality. In the arbitrary-modality setting, however, each modality is accompanied by multiple remaining modalities, which together provide complementary context for determining what should be preserved in \(\mathbf x^{(m)}\). Therefore, for each modality, we characterize \(\mathbf y^{(m)}\) with respect to the remaining modalities \(\mathbf x^{[M]\backslash(m)}\) and define the essence in Definition \ref{ass:markov chain}. Intuitively, $\mathbf{x}^{[M]\backslash(m)}$ provides the cross-modal context that determines which information in $\mathbf{x}^{(m)}$ should be preserved by a sufficient representation. 

\begin{definition}
\label{ass:markov chain}
    For each modality \(\mathbf{x}^{(m)}\), the essence \(\mathbf y^{(m)}\) is characterized with respect to the remaining modalities \(\mathbf x^{[M]\backslash(m)}\)
    through the following Markov chains:
    \[
\mathbf y^{(m)} \leftrightarrow \mathbf x^{[M]\backslash (m)} \leftrightarrow \mathbf x^{(m)},
\qquad
\mathbf x^{[M]\backslash (m)} \leftrightarrow \mathbf y^{(m)} \leftrightarrow \mathbf x^{(m)}.
\]
\end{definition}
The first relation states that, conditioned on the remaining modalities, the modality does not provide additional information about its cross-modal essence. The second relation states that the essence \(\mathbf y^{(m)}\) contains the information in \(\mathbf x^{(m)}\) that is supported by the remaining modalities.

\begin{theorem}
\label{thm:m_sufficency_objective}
Under Definition~\ref{ass:markov chain}, suppose the representation \(\mathbf z^{(m)}\) is generated from the modality \(\mathbf x^{(m)}\), i.e., the encoder does not directly access \(\mathbf y^{(m)}\) or \(\mathbf x^{[M]\backslash(m)}\). Then the sufficiency term satisfies:
\begin{align}
\label{eq:1step_sufficiency}
I(\mathbf{z}^{(m)};\mathbf{y}^{(m)})=I(\mathbf{z}^{(m)};\mathbf{x}^{[M]\backslash(m)}).
\end{align}
\end{theorem}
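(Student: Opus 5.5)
The plan is to prove the equality by two applications of the data processing inequality, one in each direction. Each application uses one of the two Markov chains of Definition~\ref{ass:markov chain}, extended by the encoder chain. The hypothesis that $\mathbf z^{(m)}$ is generated from $\mathbf x^{(m)}$ alone means that, conditioned on $\mathbf x^{(m)}$, $\mathbf z^{(m)}$ is independent of $(\mathbf y^{(m)},\mathbf x^{[M]\backslash(m)})$. I will state this explicitly as
\[
\mathbf z^{(m)} \perp (\mathbf y^{(m)},\mathbf x^{[M]\backslash(m)}) \mid \mathbf x^{(m)}.
\]
This single conditional independence lets me append $\mathbf z^{(m)}$ to the end of either chain in the definition.

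First I would show $I(\mathbf z^{(m)};\mathbf y^{(m)})\le I(\mathbf z^{(m)};\mathbf x^{[M]\backslash(m)})$. Start from the first chain $\mathbf y^{(m)} \leftrightarrow \mathbf x^{[M]\backslash (m)} \leftrightarrow \mathbf x^{(m)}$ and append $\mathbf z^{(m)}$. This gives the four-variable chain
\[
\mathbf y^{(m)} \leftrightarrow \mathbf x^{[M]\backslash (m)} \leftrightarrow \mathbf x^{(m)} \leftrightarrow \mathbf z^{(m)}.
\]
To justify the appended link, I factor the joint law as $p(\mathbf y,\mathbf x^{\setminus m},\mathbf x^{(m)})\,p(\mathbf z^{(m)}\mid \mathbf x^{(m)})$. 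From this factorization it follows that $\mathbf y^{(m)} \leftrightarrow \mathbf x^{[M]\backslash(m)} \leftrightarrow \mathbf z^{(m)}$ is itself a Markov chain: marginalizing out $\mathbf x^{(m)}$ preserves conditional independence given $\mathbf x^{[M]\backslash(m)}$. The data processing inequality then gives $I(\mathbf z^{(m)};\mathbf y^{(m)})\le I(\mathbf z^{(m)};\mathbf x^{[M]\backslash(m)})$.

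Next I would show the reverse inequality. The same argument applies to the second chain $\mathbf x^{[M]\backslash (m)} \leftrightarrow \mathbf y^{(m)} \leftrightarrow \mathbf x^{(m)}$, now appending $\mathbf z^{(m)}$ after $\mathbf x^{(m)}$. This yields $\mathbf x^{[M]\backslash (m)} \leftrightarrow \mathbf y^{(m)} \leftrightarrow \mathbf z^{(m)}$, and the data processing inequality gives $I(\mathbf z^{(m)};\mathbf x^{[M]\backslash(m)})\le I(\mathbf z^{(m)};\mathbf y^{(m)})$. Combining the two inequalities gives Eq.~(\ref{eq:1step_sufficiency}).

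The main obstacle is the rigorous justification that $\mathbf z^{(m)}$ can be appended to each chain. The step that needs care is marginalizing out the middle variable $\mathbf x^{(m)}$. I would handle it with the chain rule of mutual information, expanding $I(\mathbf z^{(m)};\mathbf y^{(m)},\mathbf x^{[M]\backslash(m)})$ in two ways. In the first case, $I(\mathbf z^{(m)};\mathbf y^{(m)}\mid \mathbf x^{[M]\backslash(m)})\le I(\mathbf x^{(m)},\mathbf z^{(m)};\mathbf y^{(m)}\mid \mathbf x^{[M]\backslash(m)})$, and the right-hand side equals
\[
I(\mathbf x^{(m)};\mathbf y^{(m)}\mid \mathbf x^{[M]\backslash(m)})+I(\mathbf z^{(m)};\mathbf y^{(m)}\mid \mathbf x^{(m)},\mathbf x^{[M]\backslash(m)}).
\]
Both terms vanish: the first by the Markov chain of the definition, the second by the encoder assumption. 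The second case is handled symmetrically with $\mathbf y^{(m)}$ as the conditioning variable.
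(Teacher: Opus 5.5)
Your proof is correct and follows the same route as the paper. Both append $\mathbf z^{(m)}$ to each of the two Markov chains of Definition~\ref{ass:markov chain} and apply the data processing inequality once in each direction, obtaining the two opposite inequalities; your formalization $\mathbf z^{(m)} \perp (\mathbf y^{(m)},\mathbf x^{[M]\backslash(m)}) \mid \mathbf x^{(m)}$ and the chain-rule check that the appended links are valid just spell out a step the paper asserts without detail.
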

\begin{proof}
    All proofs can be found in Appendix \ref{sec:proof of section}.
\end{proof}
Theorem~\ref{thm:m_sufficency_objective} shows that maximizing the sufficiency term \(I(\mathbf z^{(m)};\mathbf y^{(m)})\) can be reduced to maximizing the dependence between \(\mathbf z^{(m)}\) and \(\mathbf x^{[M]\backslash(m)}\). Since \(\mathbf z^{[M]\backslash (m)}\) is obtained from \(\mathbf x^{[M]\backslash (m)}\) through the encoders and projection heads of the remaining modalities, Data Processing Inequality gives $I(\mathbf z^{(m)};\mathbf x^{[M]\backslash (m)}) \geq I(\mathbf z^{(m)};\mathbf z^{[M]\backslash (m)})$. Therefore, we optimize the observable sufficiency objective
\begin{align}
\label{eq:m_sufficiency}
    \max_{\mathbf{z}^{(1)},\dots, \mathbf{z}^{(M)}} \frac{1}{M}\sum_{m=1}^M I(\mathbf{z}^{(m)};\mathbf{z}^{[M]\backslash(m)}).
\end{align}
This objective aligns one modality with all remaining modalities, revealing a One-vs-All paradigm from the sufficiency term. Computing exactly $I(\mathbf{z}^{(m)};\mathbf{z}^{[M]\backslash(m)})$ requires integrating over the representation spaces, which is generally intractable. To address this issue, we introduce a projection $t:(\mathbb{R}^d)^{(M-1)}\rightarrow \mathbb{R}^d$ into InfoNCE such that we can maximize the lower bound of such a multivariate mutual information term. Accordingly, we define the InfoNCE objective for the $m$-th modality with $N$ samples as:
\begin{align}
\label{eq:m_infonce}
    \mathcal{L}^{(m)}_{\text{InfoNCE}}(\theta) = - \frac{1}{N}\sum_{n=1}^N\log \frac{\exp(s({\mathbf{z}}_n^{(m)}, t({\mathbf{z}}_n^{^{[M]\backslash(m)}}))/\tau)}{\sum^N_{n'=1, n'\ne n}\exp(s({\mathbf{z}}_n^{(m)}, {t(\mathbf{z}}_{n'}^{^{[M]\backslash(m)}}))/\tau)},
\end{align}
where the score function $s$ outputs the similarity between one modality ${\mathbf{z}}^{(m)}$ and the output of $t$ with $M-1$ modalities ${\mathbf{z}}_n^{^{[M]\backslash(m)}}$, and cosine similarity is a commonly used choice for $s$. As Theorem \ref{thm:infonce_sufficiency} demonstrates, we can minimize the proposed InfoNCE variant from Eq. (\ref{eq:m_infonce}) to achieve the sufficiency of the $m$-th modality by maximizing the lower bound of Eq. (\ref{eq:m_sufficiency}). 
\begin{theorem}
    \label{thm:infonce_sufficiency}
    Minimizing the loss function in Eq. (\ref{eq:m_infonce}) corresponds to maximizing the lower bound of $I(\mathbf{z}^{(m)};\mathbf{z}^{[M]\backslash(m)})$.
\end{theorem}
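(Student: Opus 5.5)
The plan is to treat Eq.~(\ref{eq:m_infonce}) as a standard InfoNCE objective between two random variables. Take $\mathbf{a}=\mathbf{z}^{(m)}$ and $\mathbf{b}=\mathbf{z}^{[M]\backslash(m)}$, with critic $h(\mathbf{a},\mathbf{b})=\exp\bigl(s(\mathbf{a},t(\mathbf{b}))/\tau\bigr)$. The proof then follows the argument of \cite{oord2019representationlearningcontrastivepredictive,10.5555/3495724.3496212}. The one structural point to handle is that $t$ collapses the $M-1$ remaining embeddings into a single vector. Because of this, $h$ is just a particular (restricted) positive critic on the joint space of $(\mathbf{a},\mathbf{b})$, and the InfoNCE bound holds for \emph{any} positive critic. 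So $t$ and $s$ need no special properties beyond measurability and $h>0$.

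First I will set up the sampling model. Draw $N$ i.i.d.\ tuples $(\mathbf{x}^{(1)}_n,\dots,\mathbf{x}^{(M)}_n)$. Pair each anchor $\mathbf{a}_n$ with its positive $\mathbf{b}_n\sim p(\mathbf{b}\mid\mathbf{a}_n)$, and treat the $\mathbf{b}_{n'}$ with $n'\neq n$ as draws from the marginal $p(\mathbf{b})$. I then write the expected loss as $-\mathbb{E}\log\frac{h(\mathbf{a}_n,\mathbf{b}_n)}{\sum_{n'}h(\mathbf{a}_n,\mathbf{b}_{n'})}$.

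Second, I prove the bound $I(\mathbf{a};\mathbf{b})\ge \log N-\mathbb{E}[\mathcal{L}]$ via the variational (NWJ/Donsker--Varadhan) route of \cite{10.5555/3495724.3496212}. Define the multi-sample critic $f(\mathbf{a},\mathbf{b}_{1:N})=\log\frac{N\,h(\mathbf{a},\mathbf{b}_1)}{\sum_j h(\mathbf{a},\mathbf{b}_j)}$. Apply $I(\mathbf{a};\mathbf{b}_{1:N})\ge \mathbb{E}_{p(\mathbf{a},\mathbf{b}_{1:N})}[f]-\mathbb{E}_{p(\mathbf{a})p(\mathbf{b}_{1:N})}[e^{f-1}]$. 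Under the product measure the $\mathbf{b}_j$ are exchangeable, so the expectation $\mathbb{E}[h(\mathbf{a},\mathbf{b}_1)/\sum_j h]$ equals $1/N$, and the second term becomes a constant. Finally, since the negatives are independent of $(\mathbf{a},\mathbf{b}_1)$, $I(\mathbf{a};\mathbf{b}_{1:N})=I(\mathbf{a};\mathbf{b}_1)$. Together these give $\mathbb{E}[\mathcal{L}^{(m)}_{\text{InfoNCE}}]\ge \log N - I(\mathbf{z}^{(m)};\mathbf{z}^{[M]\backslash(m)})$ up to the constant offset, so decreasing the loss raises the lower bound. As a sanity check, I will also verify the bound in the alternative form via the density-ratio argument: at optimum $h\propto p(\mathbf{b}\mid\mathbf{a})/p(\mathbf{b})$.

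Third, I reconcile the bound with the exact form of Eq.~(\ref{eq:m_infonce}), whose denominator excludes the positive ($n'\neq n$). In that case the denominator has $N-1$ negatives and the usual bound does not apply verbatim. I will handle it by monotonicity. Since $\sum_{n'\neq n}h\le\sum_{n'}h$, the paper's loss upper-bounds the standard InfoNCE loss, up to the additive term $\log$ of the positive share, i.e.\ $\mathcal{L}_{\neq}\ge\mathcal{L}_{\text{std}}$. Therefore $\log N-\mathcal{L}_{\neq}\le \log N-\mathcal{L}_{\text{std}}\le I$, which remains a valid lower bound.

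I expect the main obstacle to be this second-to-last point together with the data-processing step. I must make sure the bound concerns $I(\mathbf{z}^{(m)};\mathbf{z}^{[M]\backslash(m)})$ and not $I(\mathbf{z}^{(m)};t(\cdot))$. The clean way is to note $I(\mathbf{z}^{(m)};t(\mathbf{z}^{[M]\backslash(m)}))\le I(\mathbf{z}^{(m)};\mathbf{z}^{[M]\backslash(m)})$ by the data processing inequality. Alternatively, I can simply observe that the critic-based argument never requires $h$ to be injective in $\mathbf{b}$.
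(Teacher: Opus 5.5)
Step 3 is the only place your argument touches the actual loss in Eq.~(\ref{eq:m_infonce}), and there the inequality goes the wrong way. Shrinking the denominator from $\sum_{n'}h_{nn'}$ to $\sum_{n'\neq n}h_{nn'}$, with $h_{nn'}=h(\mathbf a_n,\mathbf b_{n'})$, makes each ratio larger and each $-\log$ term smaller. Exactly,
\[
\mathcal L_{\mathrm{std}}=\mathcal L_{\neq}+\frac{1}{N}\sum_{n=1}^{N}\log\Bigl(1+\frac{h_{nn}}{\sum_{n'\neq n}h_{nn'}}\Bigr)\;\ge\;\mathcal L_{\neq}.
\]
So $\log N-\mathcal L_{\neq}$ sits \emph{above} the standard InfoNCE bound, and your chain $\log N-\mathcal L_{\neq}\le\log N-\mathcal L_{\mathrm{std}}\le I$ breaks at its first link.

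The conclusion is also false at the generality you claim (any positive critic, finite $N$). Take a constant critic with $I=0$, for example fully collapsed embeddings. Then $\mathcal L_{\neq}=\log(N-1)$, so $\log N-\mathcal L_{\neq}=\log\frac{N}{N-1}>0=I$. Renormalizing by $\log(N-1)$ does not rescue it either. At $N=2$ one has $-\mathbb E[\mathcal L_{\neq}]=\mathbb E_{p(\mathbf a,\mathbf b)}[\log h]-\mathbb E_{p(\mathbf a)p(\mathbf b)}[\log h]$, which scales linearly with the critic and so is not bounded by $I$.

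The repair needs an ingredient that your ``any positive critic'' framing explicitly disclaims. There are two options.

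\begin{itemize}
\item \textbf{The paper's large-batch route.} The paper never compares the two losses. It replaces the batch average and the sum over negatives by expectations. Up to an $N$-dependent constant, the loss becomes $-\mathbb E_{p(\mathbf a,\mathbf b)}[T]+\mathbb E_{p(\mathbf a)}\log\mathbb E_{p(\mathbf b)}[e^{T}]$. This is at least $-I$ by Donsker--Varadhan applied to $p(\mathbf b\mid\mathbf a)$ against $p(\mathbf b)$ for each $\mathbf a$. In this limit the positive's share of the denominator vanishes, so excluding it is harmless.
\item \textbf{A finite-$N$ route using boundedness.} Use the fact that $s$ is a cosine similarity, so $h\in[e^{-1/\tau},e^{1/\tau}]$. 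Then the correction term above is at most $\log\bigl(1+e^{2/\tau}/(N-1)\bigr)$. Your Step 2 then gives
\[
I(\mathbf z^{(m)};\mathbf z^{[M]\backslash(m)})\ge\log N-\log\bigl(1+\tfrac{e^{2/\tau}}{N-1}\bigr)-\mathbb E[\mathcal L_{\neq}],
\]
less a further $e^{-1}$ unless you shift $f$ by $1$ as Poole et al.\ do. The offset does not depend on the network parameters.
\end{itemize}

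Step 2 itself is fine.

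For the issue with $t$, rely on your critic-generality remark, not the DPI remark. The paper's $\bar{\mathbf z}^{(m)}$ is the projection of $\mathbf z^{(m)}$ itself onto the span of the remaining embeddings. It is therefore not a function of $\mathbf z^{[M]\backslash(m)}$ alone, so $I(\mathbf z^{(m)};t(\cdot))$ is not the right object. By contrast, $h$ is still a positive function of $(\mathbf a,\mathbf b)$.
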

A straightforward choice for the projection $t$ is to implement an MLP to map the concatenated ${\mathbf{z}}^{[M]\backslash(m)}$ from $\mathbb{R}^{(M-1)d}$ to $\mathbb{R}^d$, and then compute the cosine similarity. First, the MLP introduces many learnable parameters. Even if the parameter update is not counted, its computational complexity is $\mathcal{O}(d^2M)$. The MLP also neglects the geometric relation among modalities in the shared representation space and performs dimensionality reduction for cosine similarity.

\begin{wrapfigure}[15]{r}{0.45\columnwidth}
\vspace{-\dimexpr\baselineskip+\abovecaptionskip}
  \centering
 \includegraphics[width=\linewidth]{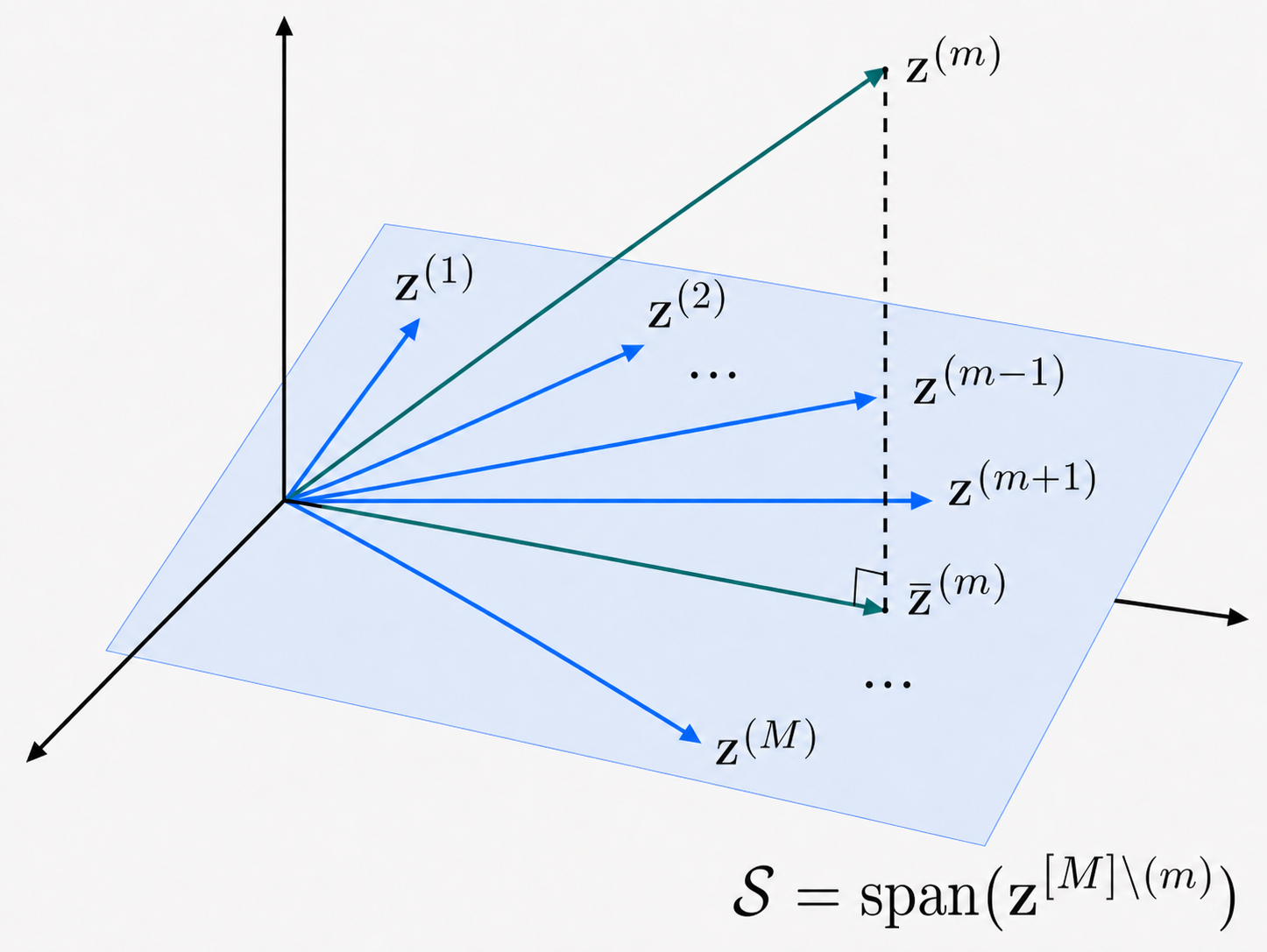}
 \caption{
We align ${\mathbf{z}}^{(m)}$ with its projection $\bar{\mathbf{z}}^{(m)}$ onto $\mathcal{S}$,
encouraging each modality to preserve information predictable from the remaining
modalities.
}
  \label{fig:halfcol}
\end{wrapfigure}
If the remaining modalities jointly provide sufficient relevant information for the same sample, the cross-modal information of ${\mathbf{z}}^{(m)}$ should be well supported by the subspace, $\mathcal{S}=\text{span}({\mathbf{z}}^{[M]\backslash(m)})$, spanned by ${\mathbf{z}}^{[M]\backslash(m)}$. Therefore, instead of forcing all modality embeddings to become identical, we align each modality embedding with its projection $\bar{\mathbf{z}}^{(m)}$ onto $\mathcal{S}$. As Figure \ref{fig:halfcol} shows, $\bar{\mathbf{z}}^{(m)}$, the projection of ${\mathbf{z}}^{(m)}$ onto $\mathcal{S}$, should be close to ${\mathbf{z}}^{(m)}$. For the $n$-th sample, we can use ${\mathbf{z}}_n^{[M]\backslash (m)}$ to form a matrix:
\begin{align}
    A_n = [{\mathbf{z}}_n^{(1)}\dots {\mathbf{z}}_n^{(m-1)}\ {\mathbf{z}}_n^{(m+1)}\dots {\mathbf{z}}_n^{(M)}],
\end{align}
where $A_n\in \mathbb{R}^{d\times (M-1)}$. In this setting, we can compute the projection $\bar{\mathbf{z}}^{(m)}$ in the closed-form expression:
\begin{align}
   \bar{\mathbf{z}}^{(m)} = A_n(A_n^\top A_n + \lambda I)^{-1}A_n^\top {\mathbf{z}}^{(m)} \in\mathbb{R}^{d},
\end{align}
where $\lambda$ is a small numerical constant to guarantee the inverse. This closed-form projection incurs $\mathcal{O}(dM^2)$ complexity per sample. Given typical settings where $d \gg M$, it is substantially more efficient than an MLP-based projector ($\mathcal{O}(d^2M)$) and introduces no learnable parameters. Using this method, we obtain $t$ to project the remaining-modality embeddings into $\mathbb{R}^d$, then we can compute the cosine similarity between ${\mathbf{z}}^{(m)}$ and $\bar{\mathbf{z}}^{(m)}$ for alignment. 

The overall sufficiency objective is defined as:
\begin{align}
    \mathcal{L}_{S}(\theta)= \frac{1}{M}\sum_{m=1}^M \mathcal{L}_{\text{InfoNCE}}^{(m)}(\theta).
    \label{eq:overall_sufficiency}
\end{align}
By Theorem \ref{thm:sandwich bound for dual total correlation},  minimizing \(\mathcal L_S\) is theoretically maximizing the dual total correlation (\text{DTC}) \cite{HAN1975337}.
\begin{theorem}
\label{thm:sandwich bound for dual total correlation}
    Optimizing Eq. (\ref{eq:m_sufficiency}) approximately maximizes the dual total correlation ($\mathrm{DTC}$) among $\mathbf{z}^{(1)},\dots, \mathbf{z}^{(M)}$, which is defined as 
    \begin{align}
        \mathrm{DTC}(\mathbf{z}^{[M]})= H(\mathbf{z}^{[M]}) -\sum_{m=1}^M H(\mathbf{z}^{(m)}|\mathbf{z}^{[M]\backslash(m)}).
    \end{align}
    The theoretical basis is the following sandwich bound, which shows that $\mathrm{DTC}$ is bounded from both sides by scaled forms of our One-vs-All sufficiency term in Eq.~(\ref{eq:m_sufficiency}):
    \begin{align}
         \frac{1}{M}\sum_{m=1}^M I(\mathbf{z}^{(m)}; \mathbf{z}^{[M]\backslash (m)})\leq \mathrm{DTC}(\mathbf{z}^{[M]}) \leq \frac{M-1}{M}\sum_{m=1}^M I(\mathbf{z}^{(m)}; \mathbf{z}^{[M]\backslash (m)}).
    \end{align}
\end{theorem}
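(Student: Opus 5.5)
The plan is to express every quantity through joint and conditional entropies and reduce both inequalities to two standard facts: conditioning reduces entropy, and joint entropy is subadditive. Write $I_m := I(\mathbf z^{(m)};\mathbf z^{[M]\backslash(m)}) = H(\mathbf z^{(m)}) - H(\mathbf z^{(m)}\mid \mathbf z^{[M]\backslash(m)})$. Throughout I assume $M\ge 2$ and that all (differential) entropies involved are finite. Both facts remain valid for differential entropy, so no discreteness is needed.

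\textbf{Lower bound.} I will show the stronger statement $I_m \le \mathrm{DTC}(\mathbf z^{[M]})$ for every $m$; averaging over $m$ then gives the left inequality. Using $H(\mathbf z^{[M]}) = H(\mathbf z^{(m)}) + H(\mathbf z^{[M]\backslash(m)}\mid \mathbf z^{(m)})$, we get
\begin{align*}
\mathrm{DTC}(\mathbf z^{[M]}) - I_m = H(\mathbf z^{[M]\backslash(m)}\mid \mathbf z^{(m)}) - \sum_{k\neq m} H(\mathbf z^{(k)}\mid \mathbf z^{[M]\backslash(k)}).
\end{align*}
Next I expand $H(\mathbf z^{[M]\backslash(m)}\mid \mathbf z^{(m)})$ by the chain rule over the indices $k\neq m$ taken in increasing order. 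Each summand has the form $H(\mathbf z^{(k)}\mid \mathbf z^{(m)},\text{earlier } \mathbf z^{(j)})$. It conditions on a subset of $\mathbf z^{[M]\backslash(k)}$, so it is at least $H(\mathbf z^{(k)}\mid \mathbf z^{[M]\backslash(k)})$. Summing these bounds shows the difference is nonnegative.

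\textbf{Upper bound.} First I note the identity
\begin{align*}
\sum_{m} I_m = \mathrm{DTC} + \mathrm{TC}, \qquad \mathrm{TC} := \sum_m H(\mathbf z^{(m)}) - H(\mathbf z^{[M]}).
\end{align*}
It follows because $\sum_m I_m - \mathrm{DTC} = \sum_m H(\mathbf z^{(m)}) - H(\mathbf z^{[M]})$. With this identity, the right inequality $M\,\mathrm{DTC}\le (M-1)\sum_m I_m$ becomes $\mathrm{DTC} \le (M-1)\,\mathrm{TC}$. To prove it, rewrite each conditional entropy as $H(\mathbf z^{(m)}\mid \mathbf z^{[M]\backslash(m)}) = H(\mathbf z^{[M]}) - H(\mathbf z^{[M]\backslash(m)})$, which gives
\begin{align*}
\mathrm{DTC} = \sum_m H(\mathbf z^{[M]\backslash(m)}) - (M-1)H(\mathbf z^{[M]}).
\end{align*}
Subtracting, the target inequality is equivalent to $\sum_m H(\mathbf z^{[M]\backslash(m)}) \le (M-1)\sum_k H(\mathbf z^{(k)})$. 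This follows by applying subadditivity $H(\mathbf z^{[M]\backslash(m)})\le \sum_{k\neq m}H(\mathbf z^{(k)})$ to each term and summing over $m$: each $H(\mathbf z^{(k)})$ appears exactly $M-1$ times.

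The only step needing care is the upper bound. The key is to recognize that it is equivalent to $\mathrm{DTC}\le (M-1)\mathrm{TC}$, a Han-type inequality, and to route it through the identity $\sum_m I_m=\mathrm{DTC}+\mathrm{TC}$. After that reduction, the rest is bookkeeping with the chain rule. The lower bound is immediate once one sees that each single One-vs-All term $I_m$ is already dominated by $\mathrm{DTC}$.
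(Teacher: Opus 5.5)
Your proof is correct, and it reaches the result by a more elementary and slightly stronger route than the paper's.

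The paper proves the identity $\sum_m I(\mathbf z^{(m)};\mathbf z^{[M]\backslash(m)})=\mathrm{TC}+\mathrm{DTC}$. It then cites Han's inequalities for the two comparisons $\mathrm{TC}\le (M-1)\,\mathrm{DTC}$ and $\mathrm{DTC}\le (M-1)\,\mathrm{TC}$. After that, each side of the sandwich is one line of algebra.

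Your upper bound uses the same reduction, but you prove $\mathrm{DTC}\le (M-1)\,\mathrm{TC}$ directly, by applying subadditivity to each $H(\mathbf z^{[M]\backslash(m)})$.

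Your lower bound bypasses $\mathrm{TC}$ entirely. The chain rule plus ``conditioning reduces entropy'' gives $I(\mathbf z^{(m)};\mathbf z^{[M]\backslash(m)})\le \mathrm{DTC}$ for every single $m$. So $\mathrm{DTC}$ dominates the maximum of the One-vs-All terms, not just their average. Averaging this bound and invoking the identity recovers exactly $\mathrm{TC}\le (M-1)\,\mathrm{DTC}$.

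What your route buys is self-containment. The instance of Han's inequality the paper displays, $H(\mathbf z^{[M]})\le\frac{1}{M-1}\sum_m H(\mathbf z^{[M]\backslash(m)})$, is equivalent only to $\mathrm{DTC}\ge 0$. The two comparisons the paper actually uses therefore rest on other members of Han's family that it does not spell out. Your chain-rule and subadditivity steps supply precisely those.

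The paper's route, in turn, is shorter and makes the symmetric role of the two Han-type inequalities explicit.
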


\begin{remark}
Our sufficiency objective is closely related to recent beyond-pairwise alignment methods such as Symile, but differs in both its information-theoretic form and its interpretation. Symile maximizes total correlation \cite{HAN1975337} ($\mathrm{TC}$), defined as:
\begin{align}
    \mathrm{TC}(\mathbf z^{[M]})
= \sum_{m=1}^{M} H(\mathbf z^{(m)})-H(\mathbf z^{[M]}).
\end{align}
Our objective introduces two immediate advantages over Symile. First, for the sufficiency term, $\mathrm{TC}$ measures dependence among all modality representations as a whole, while $\mathrm{DTC} $ explicitly specifies what information each modality should preserve with respect to the remaining modalities, which directly matches the sufficiency definition. Figure~\ref{fig:venn} intuitively shows the distinction between $\mathrm{DTC}$ and $\mathrm{TC}$. Second, beyond sufficiency, our additional minimality objective provided in the following section compresses modality-specific nuisance information.
\end{remark}

\begin{figure}[!htbp]
    \centering \includegraphics[width=0.65\textwidth]{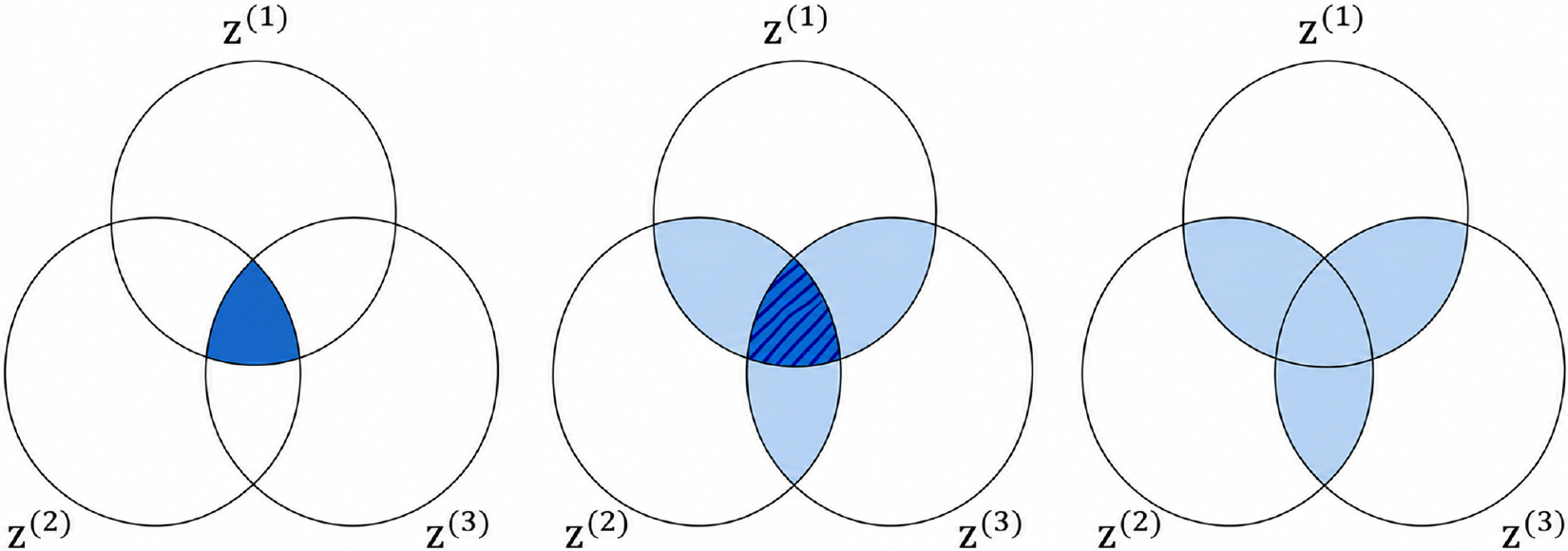}
    \caption{Intuitive illustration of multivariate dependence among three modality representations
\(\mathbf z^{(1)}, \mathbf z^{(2)}, \mathbf z^{(3)}\). Left: common agreement among all modalities. Middle: \text{TC}-style dependence compares the joint distribution with the product of marginals. Right: \text{DTC}-style dependence measures how each modality is predictable from the remaining modalities. The diagram is intended to provide intuition rather than a formal information decomposition.}
    \label{fig:venn}  
\end{figure}
\subsection{Multi-Modal Minimality}
For a representation $\mathbf{z}^{(m)}$ to be minimal, it must satisfy $I(\mathbf{z}^{(m)}; \mathbf{n}^{(m)}) = 0$, which is equivalent to minimizing $I(\mathbf{z}^{(m)}; \mathbf{n}^{(m)})$. Direct computation is intractable due to the unobserved $\mathbf{n}^{(m)}$. To obtain a tractable surrogate, we leverage the structural dependency $\mathbf{n}^{(m)} \leftarrow \mathbf{x}^{(m)} \rightarrow \mathbf{z}^{(m)}$. By Data Processing Inequality, this implies $I(\mathbf{z}^{(m)}; \mathbf{n}^{(m)}) \leq I(\mathbf{z}^{(m)}; \mathbf{x}^{(m)})$, providing a valid upper bound. Consequently, we minimize $I(\mathbf{z}^{(m)}; \mathbf{x}^{(m)})$ as a tractable upper-bound surrogate for the minimality. Integrating over the representation and input spaces to compute \(I(\mathbf z^{(m)};\mathbf x^{(m)})\) is generally intractable. However, all projection heads map modality-specific representations into the same shared representation space \(\mathcal Z\subseteq\mathbb R^d\). 
Therefore, for \(\mathbf x^{(m)}\), \(p_{\theta^{(m)}}(\mathbf{z}| \mathbf{x}^{(m)})\) denotes the conditional density of the projected embedding of \(\mathbf x^{(m)}\), evaluated at the same shared-space coordinate \(\mathbf{z}\in\mathcal Z\). 

\begin{theorem}
\label{prop:m_kl_minimality}
For each modality \(\mathbf x^{(m)}\), the minimality term admits the following upper bound:
\begin{align}
    I(\mathbf z^{(m)};\mathbf x^{(m)})
\leq
\mathbb E_{p(\mathbf x^{[M]})}
\left[
D_{\mathrm{KL}}
\left(
p_{\theta^{(m)}}(\mathbf z| \mathbf x^{(m)})
\;\middle\|\;
p_{\theta^{[M]\backslash(m)}}(\mathbf z | \mathbf x^{[M]\backslash(m)})
\right)
\right],
\label{eq:m_kl_minimality}
\end{align}
where \(
p_{\theta^{[M]\backslash(m)}}(\mathbf z | \mathbf x^{[M]\backslash(m)})
=
\frac{
\prod_{i\neq m}p_{{\theta^{(i)}}}(\mathbf z|\mathbf{x}^{(i)})
}{
\int \prod_{i\neq m}p_\theta^{(i)}(\mathbf u| \mathbf {x}^{(i)})\,d\mathbf u
}
\) is the normalized product of distributions induced by the remaining modalities, and $D_{\text{KL}}(\cdot|| \cdot )$ is Kullback–Leibler (KL) divergence \cite{Kullback1951OnIA}.
\end{theorem}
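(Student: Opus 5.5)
The plan is the standard variational upper bound on mutual information, in which the marginal is replaced by an arbitrary auxiliary density, with the auxiliary density chosen in a sample-dependent way. Write
\[
I(\mathbf z^{(m)};\mathbf x^{(m)})=\mathbb E_{p(\mathbf x^{(m)})}\!\left[D_{\mathrm{KL}}\!\left(p_{\theta^{(m)}}(\mathbf z|\mathbf x^{(m)})\,\middle\|\,p(\mathbf z)\right)\right],
\]
where $p(\mathbf z)=\mathbb E_{p(\mathbf x^{(m)})}[p_{\theta^{(m)}}(\mathbf z|\mathbf x^{(m)})]$ is the marginal of $\mathbf z^{(m)}$. The key identity is that for any density $q(\mathbf z)$ on the shared space $\mathcal Z$, the difference between $\mathbb E[D_{\mathrm{KL}}(p_{\theta^{(m)}}(\cdot|\mathbf x^{(m)})\|q)]$ and $I$ equals $D_{\mathrm{KL}}(p(\mathbf z)\|q)\ge 0$. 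This is the usual Barber--Agakov/VIB-style argument.

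The complication is that the reference density $q=p_{\theta^{[M]\backslash(m)}}(\mathbf z|\mathbf x^{[M]\backslash(m)})$ depends on the other modalities, which are correlated with $\mathbf x^{(m)}$. It therefore cannot be pulled out as a fixed $q$. I would handle this by conditioning on $\mathbf x^{[M]\backslash(m)}$ and working through the following steps.

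\textbf{Step 1: conditional independence.} The encoder for $\mathbf z^{(m)}$ only sees $\mathbf x^{(m)}$, so $\mathbf z^{(m)}\perp \mathbf x^{[M]\backslash(m)}\mid \mathbf x^{(m)}$. The chain rule then gives
\[
I(\mathbf z^{(m)};\mathbf x^{(m)})\le I(\mathbf z^{(m)};\mathbf x^{[M]})=I(\mathbf z^{(m)};\mathbf x^{[M]\backslash(m)})+I(\mathbf z^{(m)};\mathbf x^{(m)}\mid\mathbf x^{[M]\backslash(m)}).
\]

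\textbf{Step 2: the conditional term.} Here $p(\mathbf z^{(m)}|\mathbf x^{[M]})=p_{\theta^{(m)}}(\mathbf z|\mathbf x^{(m)})$. For each fixed value of $\mathbf x^{[M]\backslash(m)}$, apply the variational bound with $q=p_{\theta^{[M]\backslash(m)}}(\cdot|\mathbf x^{[M]\backslash(m)})$. This is a legitimate normalized density on $\mathcal Z$ that is measurable in the conditioning variable, so
\[
I(\mathbf z^{(m)};\mathbf x^{(m)}\mid\mathbf x^{[M]\backslash(m)})=\mathbb E\,D_{\mathrm{KL}}\big(p_{\theta^{(m)}}\|q\big)-\mathbb E\,D_{\mathrm{KL}}\big(p(\mathbf z^{(m)}|\mathbf x^{[M]\backslash(m)})\|q\big).
\]
The expectations are over $p(\mathbf x^{[M]})$ and $p(\mathbf x^{[M]\backslash(m)})$ respectively.

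\textbf{Step 3: the leftover term.} Adding the term $I(\mathbf z^{(m)};\mathbf x^{[M]\backslash(m)})$ from Step 1 and rewriting it as an expected KL against the marginal $p(\mathbf z^{(m)})$, the total becomes
\[
\mathbb E\,D_{\mathrm{KL}}(p_{\theta^{(m)}}\|q)+\mathbb E\big[D_{\mathrm{KL}}(p(\mathbf z^{(m)}|\mathbf x^{[M]\backslash(m)})\|p(\mathbf z^{(m)}))-D_{\mathrm{KL}}(p(\mathbf z^{(m)}|\mathbf x^{[M]\backslash(m)})\|q)\big].
\]
The bracket is $\mathbb E\log\big(q/p(\mathbf z^{(m)})\big)$ under the conditional law of $\mathbf z^{(m)}$. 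It is not obviously nonpositive, so this route gives the stated bound only after controlling that bracket.

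\textbf{Step 4: the cleaner route.} I would therefore instead bound $I(\mathbf z^{(m)};\mathbf x^{(m)})$ directly with the joint-expectation form. Write
\[
I=\mathbb E_{p(\mathbf x^{[M]})}\mathbb E_{p_{\theta^{(m)}}(\mathbf z|\mathbf x^{(m)})}\big[\log p_{\theta^{(m)}}(\mathbf z|\mathbf x^{(m)})-\log p(\mathbf z)\big]
\]
and add and subtract $\log q$. This gives
\[
I=\mathbb E\,D_{\mathrm{KL}}(p_{\theta^{(m)}}\|q)-\mathbb E_{p(\mathbf x^{[M]})}\mathbb E_{p_{\theta^{(m)}}}\!\left[\log\frac{p(\mathbf z)}{q(\mathbf z|\mathbf x^{[M]\backslash(m)})}\right].
\]
It then remains to show the last expectation is $\ge0$, i.e.\ that it is a KL-type quantity. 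I expect this to be the main obstacle.

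\textbf{Step 5: establishing nonnegativity.} I would first try Jensen/Gibbs on the joint law of $(\mathbf z^{(m)},\mathbf x^{[M]\backslash(m)})$. The quantity equals $D_{\mathrm{KL}}\big(p(\mathbf z^{(m)})p(\mathbf x^{[M]\backslash(m)})\,\|\,q(\mathbf z|\mathbf x^{[M]\backslash(m)})p(\mathbf x^{[M]\backslash(m)})\big)$ only if $\mathbf z^{(m)}$ is independent of the remaining modalities, which fails in general. In general it equals
\[
\mathbb E\,D_{\mathrm{KL}}\big(p(\mathbf z^{(m)}|\mathbf x^{[M]\backslash(m)})\|q\big)-I(\mathbf z^{(m)};\mathbf x^{[M]\backslash(m)}).
\]
So the final step needs $I(\mathbf z^{(m)};\mathbf x^{[M]\backslash(m)})\le\mathbb E\,D_{\mathrm{KL}}(p(\mathbf z^{(m)}|\mathbf x^{[M]\backslash(m)})\|q)$. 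This inequality is plausible when $q$, a product-of-experts posterior from the remaining modalities, is at least as spread as the true cross-modal predictive. If it cannot be proven in general, I would state it as the operative assumption, or replace $I(\mathbf z^{(m)};\mathbf x^{(m)})$ by the conditional term $I(\mathbf z^{(m)};\mathbf x^{(m)}|\mathbf x^{[M]\backslash(m)})$. That conditional term is exactly the nuisance information not supported by the other modalities, and Step 2 bounds it cleanly by the right-hand side of Eq.~(\ref{eq:m_kl_minimality}).
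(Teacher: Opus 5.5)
\paragraph{The gap, and why it cannot be closed.} Your Step 4 is exactly the decomposition in the paper's proof, and your Step 5 correctly isolates what remains. The residual $\mathbb E\bigl[\log p_{\theta^{(m)}}(\mathbf z)/q(\mathbf z|\mathbf x^{[M]\backslash(m)})\bigr]$ is taken under the \emph{joint} law of $(\mathbf z^{(m)},\mathbf x^{[M]\backslash(m)})$. It therefore equals $\mathbb E\,D_{\mathrm{KL}}\bigl(p(\mathbf z^{(m)}|\mathbf x^{[M]\backslash(m)})\,\|\,q\bigr)-I(\mathbf z^{(m)};\mathbf x^{[M]\backslash(m)})$, which is not a KL divergence.

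The paper disposes of this term in two moves. It writes $p(\mathbf x^{[M]})p_{\theta^{(m)}}(\mathbf z|\mathbf x^{(m)})=p_{\theta^{(m)}}(\mathbf x^{[M]}|\mathbf z)\,p_{\theta^{(m)}}(\mathbf z)$. It then pulls $p_{\theta^{(m)}}(\mathbf x^{[M]}|\mathbf z)$ outside the $\mathbf z$-integral, which yields ``$\mathbb E_{p(\mathbf x^{[M]}|\mathbf z)}[D_{\mathrm{KL}}(p_{\theta^{(m)}}(\mathbf z)\|q)]\ge 0$'', an expression in which $\mathbf z$ is both free and integrated out. That interchange is invalid because the conditional depends on $\mathbf z$. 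It is legitimate only when $\mathbf z^{(m)}\perp\mathbf x^{[M]\backslash(m)}$, which is the special case you flagged.

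So the gap you left open is real, and it cannot be closed. The Step 5 inequality is false in general, and so is the stated bound.
\begin{itemize}
\item Take $M=2$, $\mathbf x^{(1)}=\mathbf x^{(2)}=\mathbf x\sim\mathcal N(0,I_d)$, and both encoders $\mathcal N(\mathbf z;\mathbf x^{(i)},\sigma^2 I)$. The normalized product over $i\neq 1$ is just $p_{\theta^{(2)}}(\mathbf z|\mathbf x^{(2)})$, which coincides with $p_{\theta^{(1)}}(\mathbf z|\mathbf x^{(1)})$. The right-hand side is therefore $0$, while $I(\mathbf z^{(1)};\mathbf x^{(1)})=\tfrac{d}{2}\log(1+\sigma^{-2})>0$.
\item For general $M$, set every $\mathbf x^{(i)}=\mathbf x\sim\mathcal N(0,s^2 I_d)$. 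The right-hand side is $\tfrac{d}{2}\bigl[(M-2)-\log(M-1)\bigr]$, independent of $s$, while the left-hand side is $\tfrac{d}{2}\log(1+s^2/\sigma^2)$, which grows without bound. Adding small independent noise to each copy makes the joint law non-degenerate without changing the conclusion.
\item In general, the bound fails exactly when $\mathbb E\,D_{\mathrm{KL}}(p(\mathbf z^{(m)}|\mathbf x^{[M]\backslash(m)})\|q)<\mathbb E\,D_{\mathrm{KL}}(p(\mathbf z^{(m)}|\mathbf x^{[M]\backslash(m)})\|p(\mathbf z^{(m)}))$. That is, it fails when the product of experts predicts $\mathbf z^{(m)}$ better than its marginal does. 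This is precisely the well-aligned regime the objective drives toward.
\end{itemize}

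\paragraph{The correct repair.} Your fallback is the right fix. Step 2 holds with no extra assumptions and gives
\[
I(\mathbf z^{(m)};\mathbf x^{(m)}\mid\mathbf x^{[M]\backslash(m)})\le\mathbb E_{p(\mathbf x^{[M]})}\bigl[D_{\mathrm{KL}}\bigl(p_{\theta^{(m)}}(\mathbf z|\mathbf x^{(m)})\,\|\,q\bigr)\bigr],
\]
with slack $\mathbb E\,D_{\mathrm{KL}}(p(\mathbf z^{(m)}|\mathbf x^{[M]\backslash(m)})\|q)\ge 0$.

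The inequality in your Step 1 is in fact an equality, because $\mathbf z^{(m)}\perp\mathbf x^{[M]\backslash(m)}\mid\mathbf x^{(m)}$. Combined with Theorem~\ref{thm:m_sufficency_objective}, the left-hand side above therefore equals $I(\mathbf z^{(m)};\mathbf x^{(m)})-I(\mathbf z^{(m)};\mathbf y^{(m)})$. The corrected bound thus controls exactly the information in $\mathbf z^{(m)}$ beyond the cross-modal essence, which is what the minimality term is meant to penalize. Equivalently, $I(\mathbf z^{(m)};\mathbf x^{(m)})$ is bounded by the stated right-hand side plus $I(\mathbf z^{(m)};\mathbf x^{[M]\backslash(m)})$.

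Do not adopt the Step 5 inequality as an ``operative assumption''. As the examples show, it breaks down in the regime where the regularizer is actually used.
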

This result provides a tractable surrogate for minimizing \(I(\mathbf z^{(m)};\mathbf x^{(m)})\): by minimizing the upper bound in Eq.~(\ref{eq:m_kl_minimality}), the representation distribution of \(\mathbf x^{(m)}\) is encouraged to match the normalized product of the representation distributions induced by the remaining modalities. Thus, the minimality objective also follows a One-vs-All paradigm. The upper bound in Theorem \ref{prop:m_kl_minimality} does not have a closed form in general. Following recent theoretical insights that InfoNCE objectives induce Gaussian-like representations in high dimensions \cite{betser2026infonce}, we assume isotropic Gaussian distributions in the shared embedding space to derive a closed-form upper bound.
\begin{proposition}
\label{prop:gaussain_m_minimality}
    Assume that each modality-induced representation distribution in the shared representation space is an isotropic Gaussian, $p_{\theta^{(m)}}(\mathbf z|\mathbf x^{(m)})=\mathcal N\left(\mathbf z;\mu_{{\theta}^{(m)}}(\mathbf x^{(m)}),\sigma^2 I\right)$. For a modality \(\mathbf x^{(m)}\), the \(\mu_{\theta}^{(m)}(\mathbf x^{(m)})\)-dependent part of the upper-bound term in Theorem \ref{prop:m_kl_minimality} satisfies
    \begin{equation}
\label{eq:gaussain_m_minimality} 
\mathbb{E}_{p(\mathbf{x}^{[M]})}\!\bigl[D_{\mathrm{KL}}\!\bigl(p_{\theta^{(m)}}(\mathbf{z}|\mathbf{x}^{(m)}) \,\big\|\, p_{\theta^{[M]\backslash (m)}}(\mathbf{z}\mid\mathbf{x}^{[M]\backslash (m)})\bigr)\bigr]
\propto \mathbb{E}_{p(\mathbf{x}^{[M]})}\!\bigl[\bigl\|\mu_{\theta^{(m)}}(\mathbf{x}^{(m)}) - \bar{\mu}\bigr\|_2^2\bigr],
\end{equation}
where $\bar{\mu}=\frac{1}{M-1}\sum_{i=1, i\ne m}^M \mu_{\theta^{(i)}}(\mathbf{x}^{(i)})$.
\end{proposition}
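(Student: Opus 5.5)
The plan is to compute the Kullback–Leibler divergence in Theorem~\ref{prop:m_kl_minimality} in closed form under the isotropic Gaussian assumption. First I identify the normalized product distribution $p_{\theta^{[M]\backslash(m)}}(\mathbf z\mid\mathbf x^{[M]\backslash(m)})$. A product of $M-1$ Gaussian densities $\mathcal N(\mathbf z;\mu_{\theta^{(i)}}(\mathbf x^{(i)}),\sigma^2 I)$ has exponent
\begin{align}
-\frac{1}{2\sigma^2}\sum_{i\neq m}\|\mathbf z-\mu_{\theta^{(i)}}(\mathbf x^{(i)})\|_2^2
= -\frac{M-1}{2\sigma^2}\|\mathbf z-\bar\mu\|_2^2 + C(\mathbf x^{[M]\backslash(m)}).
\end{align}
Here the constant $C$ does not depend on $\mathbf z$. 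The equality follows by completing the square, since $\sum_{i\neq m}\|\mathbf z-\mu_i\|^2=(M-1)\|\mathbf z-\bar\mu\|^2+\sum_{i\neq m}\|\mu_i-\bar\mu\|^2$. After normalization the $\mathbf z$-independent factor cancels. The product density is therefore exactly $\mathcal N(\mathbf z;\bar\mu,\frac{\sigma^2}{M-1}I)$ with $\bar\mu=\frac{1}{M-1}\sum_{i\neq m}\mu_{\theta^{(i)}}(\mathbf x^{(i)})$.

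Next I apply the standard closed form for the KL divergence between two Gaussians $\mathcal N(\mu_1,\Sigma_1)$ and $\mathcal N(\mu_2,\Sigma_2)$ in $\mathbb R^d$:
\begin{align}
D_{\mathrm{KL}}=\frac12\Bigl[\mathrm{tr}(\Sigma_2^{-1}\Sigma_1)+(\mu_2-\mu_1)^\top\Sigma_2^{-1}(\mu_2-\mu_1)-d+\log\frac{\det\Sigma_2}{\det\Sigma_1}\Bigr].
\end{align}
With $\Sigma_1=\sigma^2 I$ and $\Sigma_2=\frac{\sigma^2}{M-1}I$, the trace term equals $(M-1)d$. The log-determinant term equals $-d\log(M-1)$. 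Both are constants independent of every $\mu$. The quadratic term is $\frac{M-1}{2\sigma^2}\|\mu_{\theta^{(m)}}(\mathbf x^{(m)})-\bar\mu\|_2^2$. So the divergence equals $\frac{M-1}{2\sigma^2}\|\mu_{\theta^{(m)}}(\mathbf x^{(m)})-\bar\mu\|_2^2$ plus a constant.

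Finally I take the expectation over $p(\mathbf x^{[M]})$. By linearity, the $\mu_{\theta^{(m)}}(\mathbf x^{(m)})$-dependent part of the upper bound is $\frac{M-1}{2\sigma^2}\,\mathbb E\bigl[\|\mu_{\theta^{(m)}}(\mathbf x^{(m)})-\bar\mu\|_2^2\bigr]$. This is proportional to the right-hand side of Eq.~(\ref{eq:gaussain_m_minimality}), with proportionality constant $\frac{M-1}{2\sigma^2}>0$.

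The main point needing care is the first step. I must verify that the normalized product has precision $(M-1)/\sigma^2$ and mean $\bar\mu$, and that the discarded terms really are free of $\mu_{\theta^{(m)}}$. The residual $\sum_{i\neq m}\|\mu_i-\bar\mu\|^2$ depends only on the remaining modalities, and it disappears in normalization anyway. This is also why the statement should be read as ``$\propto$ up to an additive constant,'' which is all the word ``dependent part'' requires.
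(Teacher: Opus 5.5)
Your proposal is correct and follows the same route as the paper. You identify the normalized product of the remaining $M-1$ Gaussians as $\mathcal N(\mathbf z;\bar\mu,\frac{\sigma^2}{M-1}I)$, apply the closed-form Gaussian KL to get $\frac{M-1}{2\sigma^2}\|\mu_{\theta^{(m)}}(\mathbf x^{(m)})-\bar\mu\|_2^2$ plus a $\mu$-independent constant, and take expectations. Your completing-the-square step and the explicit trace and log-determinant constants supply details the paper asserts without derivation, and your reading of $\propto$ as ``up to an additive constant'' matches the paper's ``up to constants.''
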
 
Proposition \ref{prop:gaussain_m_minimality} provides the tractable minimality objective for $\mathbf{x}^{(m)}$:
\begin{align}
\label{eq: final_minimality}
    \mathcal{L}^{(m)}_{M}(\theta)= \mathbb{E}_{p(\mathbf x^{[M]})}\left[||\mu_{\theta^{(m)}}(\mathbf{x}^{(m)})-\bar{\mu}||_2^2\right],
\end{align}
where the expectation is empirically approximated by the sample average over the sample size $N$. For the overall minimality objective across all modalities, we average the modality-wise losses:
\begin{align}
    \mathcal{L}_{M}(\theta) = \frac{1}{M}\sum_{m=1}^M \mathcal{L}_{M}^{(m)},
    \label{eq:overall_minimality}
\end{align}
where minimizing $\mathcal{L}_M$ provides a tractable surrogate for controlling $\frac{1}{M}\sum_{m=1}^M I(\mathbf{z}^{(m)}; \mathbf{x}^{(m)})$.

\subsection{Unified Multi-Modal Information Bottleneck}

Combining the sufficiency and minimality principles gives the following multi-modal IB: 
\begin{align}
    \max_{\mathbf{z}^{[M]}} \frac{1}{M}\sum_{m=1}^M \left(I(\mathbf{z}^{(m)};\mathbf{x}^{[M]\backslash(m)}) - \beta I(\mathbf{z}^{(m)}; \mathbf{x}^{(m)})\right),
\end{align}
 where the first term encourages the representation of each modality to preserve information related to the remaining modalities, while the second term controls excessive dependence on modality-specific nuisance information. Using the tractable objectives above, we obtain the final training objective:
\begin{align}
    \min_{\theta}
\mathcal L(\theta)
=
\mathcal L_S(\theta)
+
\beta \mathcal L_M(\theta).
\label{eq:final_objective}
\end{align}

\section{Experiments}
\label{sec:experiments}

We comprehensively evaluate OVA-IB across all-modality downstream tasks, modality-agnostic robustness using different modality combinations on downstream tasks, cross-modal retrieval, and component-wise ablations of the minimality regularizer and projection strategy.

\subsection{Experimental Setup and Benchmarks}

We evaluate OVA-IB on four multi-modal benchmarks: Vision \& Touch, MuJoCo Push, CMU-MOSEI, and WESAD. These datasets cover robotic perception, physical interaction, audio-visual-language understanding, and wearable physiological sensing. We further construct four- and five-modality variants from Vision \& Touch to evaluate scalability to higher-modality settings. For Vision \& Touch, MuJoCo Push, and CMU-MOSEI, we follow the standardized MultiBench protocol with fixed input representations and evaluation splits across methods. Since WESAD is not part of MultiBench, we apply identical synchronized-window preprocessing and subject-wise splits to all methods for fair comparison. The dataset details are provided in Appendix \ref{sec:dataset_detail}.

For classification and regression, we first pre-train modality encoders using each alignment objective. We then concatenate the learned modality representations and train a 4-layer MLP predictor with cross-entropy for classification and mean squared error for regression. We set $\beta=1$ for all benchmarks except WESAD ($\beta=0.5$).  The downstream tasks include trajectory-pair classification and orientation regression on Vision–Force–Depth (VFD), contact classification on Vision–Force–Proprioception (VFP), higher-modality trajectory classification on VFPD and VFPDO, object-position regression on MuJoCo Push, binary sentiment classification on CMU-MOSEI, and three-class affective-state classification on WESAD. For classification and regression, we evaluate OVA-IB across multiple datasets and task types, providing broad evidence of stability on supervised downstream tasks. For cross-modal retrieval, we select a task with clear practical relevance rather than arbitrary modality matching. WESAD provides such a setting, where wrist-side BVP is physiologically related to chest-side respiration and ECG through cardiopulmonary and pulse dynamics. Since suitable retrieval settings with clear practical relevance are limited, we repeat the $\text{Resp}+\text{ECG}\to \text{BVP}$ retrieval experiment three times with different random seeds and report mean \(\pm\) standard deviation of mAP (mean average precision). Implementation details are provided in the Appendix \ref{sec:experimental_implementation}.

\subsection{All-Modality Downstream Evaluation}

We first evaluate all methods under the all-modality setting, where all modalities are used during downstream evaluation. This setting measures whether the learned representations provide useful task-relevant information from all multi-modal observations. Table~\ref{tab:full_modality_downstream} reports the classification and regression results. OVA-IB achieves the best performance on most benchmarks, with particularly large gains as the number of modalities increases. On VFPDO, OVA-IB outperforms Symile and Pairwise CLIP by 12.95\% and 2.96\%, respectively. It also obtains the lowest MSE on both regression tasks. On MOSEI, where standardized pre-extracted features leave limited representation-level headroom, OVA-IB remains competitive but is not the top method. Overall, all-modality results show that OVA-IB learns compact and informative representations across both classification and regression.

\begin{table}[!htbp]
  \centering
  \renewcommand{\arraystretch}{0.9}
  \setlength{\tabcolsep}{3.5pt}
 \small
  \caption{All-modality downstream performance on classification and regression tasks. Accuracy (\%) is reported for classification, and MSE is reported for regression.}
  \label{tab:full_modality_downstream}
  \begin{tabular}{lcccccccc}
    \toprule
    \multirow{2}{*}{Method} 
    & \multicolumn{6}{c}{Classification (Acc \%)} 
    & \multicolumn{2}{c}{Regression (MSE)} \\
    \cmidrule(lr){2-7} \cmidrule(lr){8-9}
    & VFP & VFD & VFPD & VFPDO & MOSEI & WESAD & VFD & MuJoCo \\
    \midrule
    Ours               
    & \textbf{97.12} & \textbf{95.08} & \textbf{94.73} & \textbf{95.91} 
    & 76.02 & \textbf{76.18} 
    & \textbf{0.0003} & \textbf{0.0340} \\
    
    Symile             
    & 96.44 & 92.57 & 84.97 & 82.96 
    & \textbf{77.00} & 73.42 
    & 0.0015 & 0.1228 \\
    
    Gram               
    & 96.85 & 89.91 & 90.81 & 90.59 
    & 72.77 & 66.22 
    & 0.0018 & 0.0425 \\
    
    TRIANGLE           
    & 95.91 & 93.27 & -- & -- 
    & 76.13 & 56.87 
    & 0.0019 & 0.0656 \\
    
    Pairwise CLIP      
    & 96.55 & 93.77 & 93.25 & 92.95 
    & 75.05 & 70.83 
    & 0.0013 & 0.0810 \\
    
    Pairwise CLIP + IB 
    & 97.10 & 94.70 & 94.45 & 94.68 
    & 76.25 & 67.91 
    & 0.0009 & 0.3385 \\
    \bottomrule
  \end{tabular}
\end{table}

\subsection{Modality-Agnostic Evaluation}
\label{sec:modality_agnostic_evaluation}
We next evaluate modality-agnostic robustness, where models are pre-trained with all modalities but tested under different modality combinations. This setting measures whether the learned representation space remains effective when the availability of modality changes at test time. Table~\ref{tab:vfd_classification} reports a representative VFD trajectory-classification result, while Appendix \ref{sec:appendix-modality-agnostic experiment} provides the full modality-combination results on all classification and regression tasks. OVA-IB achieves the highest or near-highest performance across most modality combinations. Pairwise CLIP + IB occasionally leads on single- or dual-modality inputs, which is expected because pairwise objectives directly optimize specific modality pairs. In contrast, OVA-IB shows stronger gains when multiple modalities are available, indicating that aligning each modality with the remaining modalities yields representations that are less tied to a fixed pairwise configuration. 

The same trend holds across the full results in Appendix \ref{sec:appendix-modality-agnostic experiment}. On regression tasks such as MuJoCo Push and VFD orientation prediction, OVA-IB maintains the lowest MSE across modality combinations. On classification tasks, its advantage becomes more pronounced as the number of modalities increases. MOSEI is comparatively saturated due to its standardized pre-extracted features, but OVA-IB remains competitive on multi-modal combinations. Overall, these results show that OVA-IB also improves robustness under varying modality combinations.

\begin{table}[!htbp]
\vspace{-5pt}
  \centering
  \small
  \setlength{\tabcolsep}{3pt}
  \caption{Modality-combination evaluation on VFD trajectory classification.}
  \label{tab:vfd_classification}
  % 使用 l*{7}{c} 明确指定 1左+7居中，彻底杜绝列数不匹配报错
  \begin{tabular}{l*{7}{c}}
    \toprule
    Method & V & F & D & V+F & V+D & F+D & V+F+D \\
    \midrule
    Ours               & 94.06 & \textbf{93.31} & 91.13 & \textbf{95.55} & 94.20 & \textbf{94.14} & \textbf{95.08} \\
    Symile             & 87.65 & 92.68 & 87.23 & 89.76 & 87.44 & 88.62 & 92.57 \\
    Gram               & 92.16 & 92.25 & 86.27 & 93.12 & 92.35 & 88.16 & 89.91 \\
    TRIANGLE           & 92.40 & 92.21 & 86.95 & 92.66 & 92.75 & 88.75 & 93.27 \\
    Pairwise CLIP      & 93.05 & 92.27 & 90.91 & 93.50 & 93.44 & 91.46 & 93.77 \\
    Pairwise CLIP + IB & \textbf{94.36} & 92.37 & \textbf{91.36} & 95.08 & \textbf{94.29} & 93.09 & 94.70 \\
    \bottomrule
  \end{tabular}
\end{table}

\subsection{Cross-Modal Retrieval}
\begin{wraptable}[12]{r}{0.5\textwidth}
   %\vspace{-\dimexpr\baselineskip+\abovecaptionskip} % 核心：上移一行高度，使顶部与左侧文本顶部严格对齐
  \vspace{-\abovecaptionskip}
  \centering
  \small
  %\caption{Cross-modal retrieval performance on WESAD. The task is Resp+ECG \(\to\) BVP retrieval with mAP as the metric. Results are averaged over 3 runs with standard deviation.}
  \caption{Cross-modal retrieval performance on WESAD. The task is Resp+ECG \(\to\) BVP retrieval with mAP as the metric.}
  \label{tab:wesad_retrieval}
  \begin{tabular}{lc}
    \toprule
    Method & Resp+ECG \(\to\) BVP \\
    \midrule
    Ours               & \textbf{0.3004 \(\pm\) 0.0067}\\
    Symile             & 0.2977 \(\pm\) 0.0125 \\
    Gram               & 0.2974 \(\pm\) 0.0019 \\
    TRIANGLE           & 0.2846 \(\pm\) 0.0134 \\
    Pairwise CLIP      & 0.2918 \(\pm\) 0.0057 \\
    Pairwise CLIP + IB & 0.2938 \(\pm\) 0.0187 \\
    \bottomrule
  \end{tabular}
\end{wraptable}
We further evaluate OVA-IB on zero-shot cross-modal retrieval using WESAD. Unlike downstream classification and regression, this task directly tests whether the learned representation space aligns physiological signals across devices. Specifically, chest respiration and ECG are used as the query, and the goal is to retrieve the synchronized wrist BVP signal from a candidate pool. This setting is physiologically grounded: ECG reflects cardiac electrical activity, respiration provides cardiopulmonary context, and BVP captures peripheral pulse dynamics at the wrist.

Each method is evaluated using the scorer induced by its own pre-training objective: MIP for Symile, Gramian volume for Gram, triangle area for TRIANGLE, averaged cosine similarity for Pairwise CLIP, and the projection score for OVA-IB. This avoids mismatched scoring functions and ensures that each method is evaluated under its intended similarity metric. OVA-IB obtains the highest mAP on $\text{Resp}+\text{ECG} \to\text{BVP}$ retrieval. Although the margin is modest, the task is cross-device and zero-shot, making the improvement complementary to the supervised downstream evaluation. This indicates that the learned representation space is useful for direct cross-modal matching.

\subsection{Ablation Study on $\beta$ and Projection Strategy}
Finally, we ablate the two main components of OVA-IB: the One-vs-All minimality regularizer and the projection strategy. Table \ref{tab:joint_ablation} reports the effect of removing the minimality term by setting $\beta=0$, as well as replacing the closed-form geometric projection with a learnable MLP projector. Removing the minimality term leads to consistent degradation across 7 of 8 benchmarks. The drop becomes larger as the number of modalities increases, with the largest decline on the five-modality VFPDO task. In regression, removing the minimality term substantially increases prediction error on both VFD and MuJoCo Push. These results confirm that the One-vs-All minimality term helps suppress modality-specific nuisance information and improves the transferability of the learned representations.

The MLP projector remains competitive on several classification tasks, suggesting that the One-vs-All objective itself provides a strong alignment signal. However, it becomes less stable on regression benchmarks, especially MuJoCo Push, where the error increases substantially. This indicates that our geometry-aware projection provides a more efficient alignment mechanism than the MLP. CMU-MOSEI shows minimal sensitivity to either component, because pre-extracted standardized features consistently limit the available representational headroom. Conversely, on the less standardized WESAD dataset, the minimality term has a more pronounced effect on the downstream performance. Overall, the ablation results validate that the minimality term and the geometry-aware projection together provide a scalable and efficient alignment objective.

\begin{table}[!htbp]
  \centering
  \setlength{\aboverulesep}{0.2ex}
  \setlength{\belowrulesep}{0.2ex}
  \small 
  \caption{Ablation study on  $\beta$ and projection strategy. Accuracy (\%) and MSE are reported. $\Delta_{\beta} = \beta(\cmark) - \beta(\xmark)$, $\Delta_{\text{proj}} = \text{Ours} - \text{MLP}$. For MSE, negative $\Delta$ indicates improvement.}
  \label{tab:joint_ablation}
  \begin{tabular}{lccccccc}
    \toprule
    \multirow{2}{*}{Task} & \multirow{2}{*}{Dataset} & \multicolumn{2}{c}{$\beta$} & \multicolumn{2}{c}{Projector} & \multicolumn{2}{c}{Improvement} \\
    \cmidrule(lr){3-4} \cmidrule(lr){5-6} \cmidrule(lr){7-8}
                          &                          & \xmark & \cmark & MLP &  Ours & $\Delta_{\beta}$ & $\Delta_{\text{proj}}$ \\
    \midrule
    \multirow{6}{*}{Classification (Acc)} & VFP    & 96.22  & \textbf{97.12} & 96.93 & \textbf{97.12} & +0.90 & +0.19 \\
                                          & VFD    & 91.30  & \textbf{95.08} & 95.08 & \textbf{95.08} & +3.78 & +0.00 \\
                                          & VFPD   & 91.98  & \textbf{94.73} & 94.50 & \textbf{94.73} & +2.75 & +0.23 \\
                                          & VFPDO  & 89.49  & \textbf{95.91} & 94.24 & \textbf{95.91} & +6.42 & +1.67 \\
                                          & MOSEI  & \textbf{76.10} & 76.02 & \textbf{76.36} & 76.02 & -0.08 & -0.32 \\
                                          & WESAD & 65.09 & \textbf{76.18} & 70.66 &  \textbf{76.18} & +11.09 & +5.52\\
    \midrule
    \multirow{2}{*}{Regression (MSE)}     & VFD    & 0.0022 & \textbf{0.0003} & 0.0013 & \textbf{0.0003} & -0.0019 & -0.0010 \\
                                          & MuJoCo & 0.0817 & \textbf{0.0340} & 0.3750 & \textbf{0.0340} & -0.0477 & -0.3410 \\
    \bottomrule
  \end{tabular}
\end{table}

\section{Limitations}
Similar to competing approaches such as Symile and Gram, our evaluation pre-trains modality-specific encoders from scratch, which constrains experiments to moderately-sized datasets. Extending OVA-IB to leverage pre-trained foundation models is a promising direction for future work. Additionally, OVA-IB assumes complete modality availability during pre-training, as each modality is aligned with the complementary context provided by the remaining modalities. Although we demonstrate modality-agnostic robustness at test time across arbitrary modality combinations, adapting the framework to handle missing modalities during pre-training remains an important future direction.

\section{Conclusion}
\label{sec:conclusion}
We introduced OVA-IB, a One-vs-All Information Bottleneck framework for arbitrary-modality alignment. By defining sufficiency and minimality with respect to the remaining modalities, OVA-IB derives a DTC-style contrastive sufficiency objective and a tractable minimality regularizer for suppressing modality-specific nuisance information. Together with a geometry-aware projection, the resulting objective provides an efficient objective for multi-modal alignment. Experiments on classification, regression, modality-agnostic evaluation, and cross-modal retrieval show that OVA-IB learns compact and robust representations, especially as the number of modalities increases.

\newpage
\bibliographystyle{plain}
\bibliography{custom} % 对应 references.bib
\newpage

\appendix
% ---------- Appendix spacing fix ----------
\allowdisplaybreaks[4]
\raggedbottom

\setlength{\abovedisplayskip}{4pt plus 1pt minus 1pt}
\setlength{\belowdisplayskip}{4pt plus 1pt minus 1pt}
\setlength{\abovedisplayshortskip}{3pt plus 1pt minus 1pt}
\setlength{\belowdisplayshortskip}{3pt plus 1pt minus 1pt}
\setlength{\jot}{2pt}

\setlength{\textfloatsep}{6pt plus 1pt minus 2pt}
\setlength{\floatsep}{6pt plus 1pt minus 2pt}
\setlength{\intextsep}{6pt plus 1pt minus 2pt}

\section{Proofs of Section \ref{sec:methodology}}
\label{sec:proof of section}
\subsection{Proof of Theorem \ref{thm:m_sufficency_objective}}
\label{sec:m_sufficiency_objective}

\begin{proof}
Under Definition \ref{ass:markov chain}, we have the Markov chains:
\begin{align}
    \mathbf{y}^{(m)} \leftrightarrow \mathbf{x}^{[M]\backslash(m)} \leftrightarrow\mathbf{x}^{(m)},\\
    \mathbf{x}^{[M]\backslash(m)} \leftrightarrow \mathbf{y}^{(m)} \leftrightarrow \mathbf{x}^{(m)}.
\end{align}

Since \(\mathbf z^{(m)}\) is encoded from \(\mathbf x^{(m)}\) via $f^{(m)}$ and $g^{(m)}$ and it does not directly access \(\mathbf y^{(m)}\) or \(\mathbf x^{[M]\backslash(m)}\), combining this encoder relation with the Markov chains in Definition \ref{ass:markov chain} yields:
\begin{align}
    \mathbf{y}^{(m)}\rightarrow \mathbf{x}^{[M]\backslash(m)} \rightarrow \mathbf{x}^{(m)}\rightarrow\mathbf{z}^{(m)}. 
\end{align}

%Therefore, together with the Markov Chains in Definition~\ref{ass:markov chain}, we obtain the data-processing chains:
% \begin{align}
%     \mathbf{z}^{(m)} \leftrightarrow \mathbf{x}^{(m)} \leftrightarrow\mathbf{y}^{(m)}.
% \end{align}

 By the Data Processing Inequality (DPI) \cite{Cover2005ElementsOI}:
\begin{align}
    I(\mathbf{z}^{(m)};\mathbf{y}^{(m)}) \leq I(\mathbf{z}^{(m)}; \mathbf{x}^{[M]\backslash(m)}),
\end{align}
where $I(\cdot;\cdot)$ denotes the mutual information \cite{Cover2005ElementsOI}. Since $ \mathbf{x}^{[M]\backslash(m)} \leftrightarrow \mathbf{y}^{(m)} \leftrightarrow \mathbf{x}^{(m)}$, and $\mathbf{z}^{(m)}$ is stochastically determined by $\mathbf{x}^{(m)}$, we can obtain another Markov chain:
% \begin{align}
%     \mathbf{x}^{[M]\backslash(m)} \leftrightarrow \mathbf{y}^{(m)} \leftrightarrow \mathbf{z}^{(m)}
% \end{align}
\begin{align}
    \mathbf{x}^{[M]\backslash(m)} \rightarrow \mathbf{y}^{(m)} \rightarrow \mathbf{x}^{(m)} \rightarrow \mathbf{z}^{(m)}.
\end{align}
Applying DPI to this chain yields:
\begin{align}
    I(\mathbf{z}^{(m)};\mathbf{y}^{(m)}) \geq I(\mathbf{z}^{(m)};\mathbf{x}^{[M]\backslash(m)}).
\end{align}
Hence, we can conclude that 
\begin{align}
I(\mathbf{z}^{(m)};\mathbf{y}^{(m)})=I(\mathbf{z}^{(m)};\mathbf{x}^{[M]\backslash(m)}).
\end{align}

\end{proof}

\subsection{Donsker–Varadhan representation for multivariate mutual information}

\begin{theorem}[Donsker–Varadhan representation for multivariate mutual information]
\label{thm:dv representation multivariate}
Let $  P(\mathbf{x},\mathbf{z}_1,\dots,\mathbf{z}_k)  $ be the joint distribution of the random variables $  (\mathbf{x}, \mathbf{z}_1, \dots, \mathbf{z}_k)  $. Define the product of the marginals $Q=P(\mathbf{x}) P( \mathbf{z}_1, \dots, \mathbf{z}_k)$, where $  P(\mathbf{z}_1,\dots,\mathbf{z}_k)   $ is the joint marginal of the $  \mathbf{z}_i  $'s. The multivariate mutual information is exactly the KL divergence ($I(\mathbf{x}; \mathbf{z}_1,\dots,\mathbf{z}_k) := D_{\mathrm{KL}}(P(\mathbf{x},\mathbf{z}_1,\dots,\mathbf{z}_k)\Vert Q)$).
Applying the classical Donsker–Varadhan representation directly to this KL divergence yields the following dual variational form:
\begin{align}
    I(\mathbf{x}; \mathbf{z}_1,\dots,\mathbf{z}_k) = \sup_{T} \ \mathbb{E}_{P_{X,Z_1,\dots,Z_k}}[T] - \log \mathbb{E}_{Q}[e^T],
\end{align}

where the supremum is taken over all (measurable) functions $  T: \mathcal{X} \times \mathcal{Z}_1 \times \cdots \times \mathcal{Z}_k \to \mathbb{R}  $ such that the two expectations exist and are finite.
\end{theorem}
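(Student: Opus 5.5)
The plan is to reduce the statement to the classical two-variable Donsker--Varadhan (DV) representation by grouping variables. Set $\mathbf{w}:=(\mathbf{z}_1,\dots,\mathbf{z}_k)$, a single random element of the product space $\mathcal{Z}_1\times\cdots\times\mathcal{Z}_k$ with the product $\sigma$-algebra. Then $P(\mathbf{x},\mathbf{z}_1,\dots,\mathbf{z}_k)$ is just the joint law $P_{X,W}$, and $Q=P(\mathbf{x})P(\mathbf{z}_1,\dots,\mathbf{z}_k)$ is the product of its two marginals $P_X\otimes P_W$. By the definition adopted in the statement, $I(\mathbf{x};\mathbf{z}_1,\dots,\mathbf{z}_k)=D_{\mathrm{KL}}(P_{X,W}\Vert P_X\otimes P_W)$. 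This is the ordinary mutual information $I(\mathbf{x};\mathbf{w})$, so no genuinely multivariate structure has to be handled.

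Next, invoke the DV variational formula for KL divergence between two probability measures $P,Q$ on a common measurable space:
\begin{align*}
D_{\mathrm{KL}}(P\Vert Q)=\sup_{T}\ \mathbb{E}_P[T]-\log\mathbb{E}_Q[e^{T}],
\end{align*}
with the supremum over measurable $T$ for which both expectations are finite. For completeness, the proof of this formula has two halves. The lower bound $\geq$ follows from the Gibbs variational principle: define the tilted measure $dG=e^{T}dQ/\mathbb{E}_Q[e^T]$. Then $D_{\mathrm{KL}}(P\Vert Q)-\bigl(\mathbb{E}_P[T]-\log\mathbb{E}_Q[e^T]\bigr)=D_{\mathrm{KL}}(P\Vert G)\geq 0$ whenever $P\ll Q$; if $P\not\ll Q$, the left side is $+\infty$ and the inequality is trivial. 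Applying this with $P=P_{X,W}$ and $Q=P_X\otimes P_W$ on $\mathcal{X}\times\mathcal{Z}_1\times\cdots\times\mathcal{Z}_k$ yields the stated identity.

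The main obstacle is attainment, i.e.\ the upper direction of the supremum. When $P\ll Q$ and $D_{\mathrm{KL}}$ is finite, I would take $T^\ast=\log\frac{dP}{dQ}$, the log density ratio. Then $\mathbb{E}_Q[e^{T^\ast}]=1$ and $\mathbb{E}_P[T^\ast]=D_{\mathrm{KL}}(P\Vert Q)$, so the bound is achieved. The difficulty is that $T^\ast$ may take the value $-\infty$ on $Q$-null-for-$P$ sets, or fail integrability. To handle this, I would truncate: use $T_n=\max(\min(T^\ast,n),-n)$ and pass to the limit. Monotone convergence applies on the positive part and dominated convergence on the negative part, with $\mathbb{E}_Q[e^{T_n}]\to 1$. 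If $D_{\mathrm{KL}}=\infty$, the same truncations show the supremum is also infinite. Since the paper treats this as the classical DV result applied to the grouped variable, I would cite Donsker--Varadhan for these technical details and present the grouping argument as the proof proper.
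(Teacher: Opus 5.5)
Your proposal is correct and follows essentially the same route as the paper: grouping $(\mathbf{z}_1,\dots,\mathbf{z}_k)$ into one variable reduces the claim to the classical formula, the tilted measure $dG=e^{T}dQ/\mathbb{E}_Q[e^T]$ turns the gap into $D_{\mathrm{KL}}(P\Vert G)\geq 0$, and $T^\ast=\log\frac{dP}{dQ}$ (the paper adds an arbitrary constant) gives tightness. Your truncation $T_n$ repairs an admissibility issue the paper glosses over, since $\log\frac{dP}{dQ}$ can equal $-\infty$ on a set of positive $Q$-measure; note, however, that when $P\not\ll Q$ there is no density ratio to truncate, so for that infinite case you should instead take $T=c\,\mathbf{1}_A$ with $Q(A)=0<P(A)$ and let $c\to\infty$.
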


\label{sec:proof of multivariate dv}
\begin{proof}
Define $P:=P(\mathbf{x}, \mathbf{z}_1,\dots, \mathbf{z}_k)$ and $Q:=P(\mathbf{x})P(\mathbf{z}_1,\dots,\mathbf{z}_k)$. Then, by the definition of mutual information between $\mathbf{x}$ and the tuple $(\mathbf{z}_1,\dots, \mathbf{z}_k)$,
\begin{align}
    I(\mathbf{x}; \mathbf{z}_1, \dots , \mathbf{z}_k) = D_{KL}(P||Q).
\end{align}
So it is enough to prove that 
\begin{align}
    D_{KL}(P||Q) = \sup_T \{\mathbb{E}_P[T] -\log \mathbb{E}_Q[e^T]\},
\end{align}
where $T:\mathcal{X} \times \mathcal{Z}_1 \times \dots \times\mathcal{Z}_k \rightarrow \mathbb{R}$ is any measurable function, such that both $\mathbb{E}_P[T]$ and $\mathbb{E}_Q[e^T]$ are finite. Define $Z:=\mathbb{E}_Q[e^T]$. Since $e^T>0$, we have $Z>0$. Now, we define a Gibbs distribution $G$:
\begin{align}
    dG=\frac{1}{Z}e^TdQ,
\end{align}
where $\int dG = \int \frac{1}{Z}e^TdQ=\frac{1}{Z}\mathbb{E}_Q[e^T]=1$. Equivalently,
\begin{align}
    \frac{dG}{dQ}=\frac{e^T}{Z}.
\end{align}
Taking logarithms on both sides:
\begin{align}
    \log \frac{dG}{dQ} = \log \frac{e^T}{Z} = T-\log Z.
\end{align}
Now take expectation under $P$ on both sides:
\begin{align}
    \mathbb{E}_P\left[\log \frac{dG}{dQ}\right] = \mathbb{E}_P\left[T-\log Z \right] .
\end{align}
Because $\log Z$ is a constant with respect to the random variable,
\begin{align}
    \mathbb{E}_P\left[T-\log Z \right] = \mathbb{E}_P\left[T\right] -\log Z . 
\end{align}
Therefore, 
\begin{align}
    \mathbb{E}_P\left[\log \frac{dG}{dQ}\right] = \mathbb{E}_P[T]-\log Z = \mathbb{E}_P[T]- \log \mathbb{E}_Q[e^T].
\end{align}
Define the gap $\Delta:=D_{KL}(P||Q)-\left(\mathbb{E}_P[T]- \log \mathbb{E}_Q[e^T]\right)$. Using the definition of KL divergence,
\begin{align}
    \Delta =\mathbb{E}_P\left[\log\frac{dP} {dQ}\right]-\left(\mathbb{E}_P[T]- \log \mathbb{E}_Q[e^T]\right) =\mathbb{E}_P\left[\log\frac{dP} {dQ}\right]- \mathbb{E}_P\left[\log \frac{dG}{dQ}\right].
\end{align}
Since expectation is linear, 
\begin{align}
    \Delta = \mathbb{E}_P\left[\log\frac{dP} {dQ}- \log \frac{dG}{dQ}\right] =\mathbb{E}_P\left[\log \frac{\frac{dP} {dQ}}{\frac{dG}{dQ}}\right] =\mathbb{E}_P\left[\log\frac{dP} {dG}\right]  .
\end{align}
By the definition of KL divergence,
\begin{align}
  \Delta = \mathbb{E}_P\left[\log\frac{dP} {dG}\right] = D_{KL}(P||G)\geq 0 .
\end{align}
Since this holds for every admissible $T$, we can take the supremum over all such $T$:
\begin{align}
    D_{KL}(P||Q)\geq \sup_T\left\{\mathbb{E}_P[T] - \log \mathbb{E}_Q\left[e^T\right]\right\}.
\end{align}
To show that this lower bound is tight, choose
\begin{align}
    T^* = \log \frac{dP}{dQ}+C,
\end{align}
where $C\in \mathbb{R}$ is any constant. Then
\begin{align}
    e^{T^*} = e^{\log \frac{dP}{dQ}+C}= e^C\frac{dP}{dQ}. 
\end{align}
So
\begin{align}
    \mathbb{E}_Q[e^{T^*}]&=\int e^{T^*} dQ=\int e^{C}\frac{dP}{dQ} dQ = e^C\int dP =e^C, \\
    \mathbb{E}_P[T^*]&=\mathbb{E}_P\left[\log \frac{dP}{dQ}+C\right] = D_{KL}(P||Q)+C.
\end{align}
Therefore, 
\begin{align}
   \mathbb{E}_P[T^*] -\log  \mathbb{E}_Q[e^{T^*}] = (D_{KL}(P||Q)+C)- \log e^C = D_{KL}(P||Q).
\end{align}
The equality holds, and Theorem \ref{thm:dv representation multivariate} is proved.
\end{proof}

\subsection{Proof of Theorem \ref{thm:infonce_sufficiency}}
\label{sec:infonce_sufficiency}
\begin{proof}
For simplicity, we denote $\mathbf{r}^{(m)}= f^{(m)}(\mathbf{x}^{(m)})$. First, we can rewrite the InfoNCE loss of the $m$-th modality as:

\begingroup
\small
\setlength{\abovedisplayskip}{4pt}
\setlength{\belowdisplayskip}{4pt}
\setlength{\jot}{2pt}

\begin{align}
\mathcal{L}^{(m)}_{\mathrm{InfoNCE}} 
&= -\frac{1}{N} \sum_{n=1}^N 
\log 
\frac{
\exp\!\left(
s({\mathbf{z}}_n^{(m)}, 
t({\mathbf{z}}_n^{[M]\backslash (m)}))/\tau
\right)}
{\sum_{n'=1,\, n' \neq n}^N 
\exp\!\left(
s({\mathbf{z}}_n^{(m)}, 
t({\mathbf{z}}_{n'}^{[M]\backslash (m)}))/\tau
\right)}
\notag \\[2pt]
&= -\frac{1}{N} \sum_{n=1}^N 
\Biggl[
\frac{
s({\mathbf{z}}_n^{(m)}, 
t({\mathbf{z}}_n^{[M]\backslash (m)}))
}{\tau}
\notag \\
&\hspace{2.6cm}
-
\log \sum_{n'=1,\, n' \neq n}^N 
\exp\!\left(
\frac{
s({\mathbf{z}}_n^{(m)}, 
t({\mathbf{z}}_{n'}^{[M]\backslash (m)}))
}{\tau}
\right)
\Biggr]
\notag \\[2pt]
&= -\frac{1}{N} \sum_{n=1}^N 
\frac{
s({\mathbf{z}}_n^{(m)}, 
t({\mathbf{z}}_n^{[M]\backslash (m)}))
}{\tau}
\notag \\
&\quad
+ \frac{1}{N} \sum_{n=1}^N 
\log \sum_{n'=1,\, n' \neq n}^N 
\exp\!\left(
\frac{
s({\mathbf{z}}_n^{(m)}, 
t({\mathbf{z}}_{n'}^{[M]\backslash (m)}))
}{\tau}
\right)
\notag \\[2pt]
&= -\frac{1}{N} \sum_{n=1}^N 
\frac{1}{\tau}
s\!\left(
g^{(m)}(\mathbf{r}^{(m)}),\,
t\!\left(g^{[M]\backslash (m)}
(\mathbf{r}_n^{[M]\backslash (m)})\right)
\right)
\notag \\
&\quad
+ \frac{1}{N} \sum_{n=1}^N 
\log \sum_{n'=1,\, n' \neq n}^N 
\exp\!\Biggl(
\frac{1}{\tau}
s\!\left(
g^{(m)}(\mathbf{r}_n^{(m)}),\,
t\!\left(g^{[M]\backslash (m)}
(\mathbf{r}_{n'}^{[M]\backslash (m)})\right)
\right)
\Biggr),
\end{align}
\endgroup
where $g^{[M]\backslash (m)}(\mathbf{r}_n^{[M]\backslash (m)})=\underbrace{g^{(1)}(\mathbf{r}_{n}^{(1)}), \dots, g^{(M)}(\mathbf{r}_{n}^{(M)})}_{\text{excluding } m}$.
We can rewrite the above as the expectation form and therefore remove the subscript $n$:

\begingroup
\small
\setlength{\abovedisplayskip}{4pt}
\setlength{\belowdisplayskip}{4pt}
\setlength{\abovedisplayshortskip}{3pt}
\setlength{\belowdisplayshortskip}{3pt}
\setlength{\jot}{2pt}

\begin{align}
\mathcal{L}^{(m)}_{\mathrm{InfoNCE}}
&= -\mathbb{E}_{P(\mathbf{r}^{[M]})} \Biggl[
\frac{1}{\tau}
s\bigl(
g^{(m)}(\mathbf{r}_n^{(m)}),
g^{[M]\backslash (m)}(\mathbf{r}_n^{[M]\backslash (m)})
\bigr)
\Biggr]
\notag\\
&\quad + \mathbb{E}_{P(\mathbf{r}^{(m)})} \Biggl[
\log \mathbb{E}_{P(\mathbf{r}^{[M]\backslash (m)})} \Biggl[
\exp\Biggl(
\frac{1}{\tau}
s\bigl(
g^{(m)}(\mathbf{r}_n^{(m)}),
\notag\\[-1pt]
&\hspace{5.3cm}
g^{[M]\backslash (m)}(\mathbf{r}_n^{[M]\backslash (m)})
\bigr)
\Biggr)
\Biggr]
\Biggr] - \log N
\\
&= \mathbb{E}_{P(\mathbf{r}^{(m)})} \Biggl[
-\mathbb{E}_{P(\mathbf{r}^{[M]\backslash (m)}| \mathbf{r}^{(m)})} \Biggl[
\frac{1}{\tau}
s\bigl(
g^{(m)}(\mathbf{r}_n^{(m)}),
\notag\\[-1pt]
&\hspace{5.3cm}
g^{[M]\backslash (m)}(\mathbf{r}_n^{[M]\backslash (m)})
\bigr)
\Biggr]
\notag\\[-1pt]
&\quad + \log \mathbb{E}_{P(\mathbf{r}^{[M]\backslash (m)})} \Biggl[
\exp\Biggl(
\frac{1}{\tau}
s\bigl(
g^{(m)}(\mathbf{r}_n^{(m)}),
\notag\\[-1pt]
&\hspace{5.3cm}
g^{[M]\backslash (m)}(\mathbf{r}_n^{[M]\backslash (m)})
\bigr)
\Biggr)
\Biggr]
\Biggr] - \log N
\\
&= \mathbb{E}_{P(\mathbf{r}^{(m)})} \Biggl[
-\mathbb{E}_{P(\mathbf{r}^{[M]\backslash (m)}|\mathbf{r}^{(m)})}
\Bigl[ T(\mathbf{r}^{[M]}) \Bigr]
\notag\\[-1pt]
&\quad + \log \mathbb{E}_{P(\mathbf{r}^{[M]\backslash (m)})}
\Bigl[ e^{T(\mathbf{r}^{[M]})} \Bigr]
\Biggr] - \log N
\\
&= -\Biggl(
\mathbb{E}_{P(\mathbf{r}^{(m)})} \Biggl[
\mathbb{E}_{P(\mathbf{r}^{[M]\backslash (m)}| \mathbf{r}^{(m)})}
\Bigl[ T(\mathbf{r}^{[M]}) \Bigr]
\notag\\[-1pt]
&\quad - \log \mathbb{E}_{P(\mathbf{r}^{[M]\backslash (m)})}
\Bigl[ e^{T(\mathbf{r}^{[M]})} \Bigr]
\Biggr]
\Biggr) - \log N
\\
&\geq -I(\mathbf{r}^{(m)};\mathbf{r}^{[M]\backslash (m)}) - \log N .
\end{align}

\endgroup
where $\mathbf{r}^{[M]}= \mathbf{r}^{(1)},\dots, \mathbf{r}^{(M)}$, $P(\mathbf{r}^{[M]})$ and  $P({\mathbf{r}^{[M]\backslash (m)}})$ are the joint distributions, and $P({\mathbf{r}^{[M]\backslash (m)}|\mathbf{r}^{(m)}})$, $P(\mathbf{r}^{(m)})$ are respectively conditional and marginal distributions, and $T$ is a mapping that we parameterize with  temperature factor $\tau$, the over-parameterized modality-specific projection head $g^{(m)}$ and the score function $s$. Based on Theorem \ref{thm:dv representation multivariate}, minimizing $\mathcal{L}_{\mathrm{InfoNCE}}$ is equivalent to maximizing a lower bound of $I(\mathbf{r}^{(m)}; \mathbf{r}^{[M]\backslash (m)})$. Since $\mathbf{z}^{(m)}$ is stochastically determined by $\mathbf{r}^{(m)}$, it also holds that minimizing $\mathcal{L}^{(m)}_{\mathrm{InfoNCE}}$ maximizes the lower bound of $I(\mathbf{z}^{(m)}; \mathbf{z}^{[M]\backslash (m)})$. 

\end{proof}

\subsection{Proof of Theorem~\ref{thm:sandwich bound for dual total correlation}}

\begin{proof}
For brevity, denote
\begin{align}
    \mathrm{TC}(\mathbf z^{[M]})
=
\sum_{m=1}^{M}H(\mathbf z^{(m)})
-
H(\mathbf z^{[M]}),
\end{align}
and
\begin{align}
    \mathrm{DTC}(\mathbf z^{[M]})
=
H(\mathbf z^{[M]})
-
\sum_{m=1}^{M}
H(\mathbf z^{(m)}| \mathbf z^{[M]\backslash(m)}).
\end{align}

The sum of One-vs-All mutual information terms can be rewritten as
\begin{align}
   \sum_{m=1}^{M}
I(\mathbf z^{(m)};\mathbf z^{[M]\backslash(m)})
=
\sum_{m=1}^{M}
\left[
H(\mathbf z^{(m)})
-
H(\mathbf z^{(m)}\mid \mathbf z^{[M]\backslash(m)})
\right]. 
\end{align}
Adding and subtracting \(H(\mathbf z^{[M]})\), we obtain
\begin{align}
    \sum_{m=1}^{M}
I(\mathbf z^{(m)};\mathbf z^{[M]\backslash(m)})
&= \sum_{m=1}^{M}
\left[
H(\mathbf z^{(m)}) - H(\mathbf z^{[M]}) + H(\mathbf z^{[M]})
-
H(\mathbf z^{(m)}\mid \mathbf z^{[M]\backslash(m)})
\right]\\
&=\mathrm{TC}(\mathbf z^{[M]})
+
\mathrm{DTC}(\mathbf z^{[M]}).
\end{align}
By Han's entropy inequalities \cite{HAN1978133}:
\begin{align}
    H(\mathbf{z}^{[M]}) \leq \frac{1}{M-1} \sum_{m=1}^M H(\mathbf{z}^{[M]\backslash(m)}),
\end{align}
total correlation and dual total correlation satisfy
\begin{align}
    \mathrm{TC}(\mathbf z^{[M]})
\leq
(M-1)\mathrm{DTC}(\mathbf z^{[M]}),
\label{eq:tc_dtc}
\end{align}
and
\begin{align}
   \mathrm{DTC}(\mathbf z^{[M]})
\leq
(M-1)\mathrm{TC}(\mathbf z^{[M]}).
\label{eq:dtc_tc}
\end{align}

Using Inequality (\ref{eq:tc_dtc}),
\begin{align}
    \frac{1}{M}
\sum_{m=1}^{M}
I(\mathbf z^{(m)};\mathbf z^{[M]\backslash(m)})
=
\frac{1}{M}
\left[
\mathrm{TC}(\mathbf z^{[M]})
+
\mathrm{DTC}(\mathbf z^{[M]})
\right]
\leq
\mathrm{DTC}(\mathbf z^{[M]}).
\label{eq: ineq 1}
\end{align}

Using Inequality (\ref{eq:dtc_tc}), 
\begin{align}
   \mathrm{DTC}(\mathbf z^{[M]}) &= \frac{1}{M}\mathrm{DTC}(\mathbf z^{[M]}) + \frac{M-1}{M}\mathrm{DTC}(\mathbf z^{[M]}) \notag\\
   &\leq \frac{M-1}{M}\mathrm{TC}(\mathbf z^{[M]}) + \frac{M-1}{M}\mathrm{DTC}(\mathbf z^{[M]})\notag \\
&=
\frac{M-1}{M}
\left[
\mathrm{TC}(\mathbf z^{[M]})
+
\mathrm{DTC}(\mathbf z^{[M]})
\right] \notag\\
&=
\frac{M-1}{M}
\sum_{m=1}^{M}
I(\mathbf z^{(m)};\mathbf z^{[M]\backslash(m)}).
\label{eq:ineq 2}
\end{align}

Combining Inequality (\ref{eq: ineq 1}) and Inequality (\ref{eq:ineq 2}) gives
\begin{align}
    \frac{1}{M}
\sum_{m=1}^{M}
I(\mathbf z^{(m)};\mathbf z^{[M]\backslash(m)})
\leq
\mathrm{DTC}(\mathbf z^{[M]})
\leq
\frac{M-1}{M}
\sum_{m=1}^{M}
I(\mathbf z^{(m)};\mathbf z^{[M]\backslash(m)}),
\end{align}
which completes the proof.
\end{proof}
\subsection{Proof of Theorem \ref{prop:m_kl_minimality}}\label{sec:proof_m_kl_minimality}
\begin{proof}
\begin{align}
    I(\mathbf{z}^{(m)}; \mathbf{x}^{(m)}) 
    &= \iint p_{\theta^{(m)}}(\mathbf{z}, \mathbf{x}^{(m)}) 
       \log \frac{p_{\theta^{(m)}}(\mathbf{z}|\mathbf{x}^{(m)})}{p_{\theta^{(m)}}(\mathbf{z})} 
       \, d\mathbf{z} \, d\mathbf{x}^{(m)} \label{eq:mi_def} \\
    &=   \int\underbrace{\int \dots \int}_{M}
       p_{\theta^{(m)}}(\mathbf{z}|\mathbf{x}^{(m)}) 
       p( \mathbf{x}^{(m)}, \mathbf{x}^{[M] \backslash (m)}) \nonumber \\
    &\quad \times \log \frac{p_{\theta^{(m)}}(\mathbf{z}|\mathbf{x}^{(m)})}{p_{\theta^{(m)}}(\mathbf{z})} 
       \, d\mathbf{z} d\mathbf{x}^{[M])}  \\
    &= \underbrace{\int \dots \int}_{M} \int 
       p_{\theta^{(m)}}(\mathbf{z}|\mathbf{x}^{(m)}) 
       p( \mathbf{x}^{(m)}, \mathbf{x}^{[M] \backslash (m)}) \nonumber \\
    &\quad \times \log \left( 
       \frac{p_{\theta^{(m)}}(\mathbf{z}|\mathbf{x}^{(m)})}{\prod_{i \neq m} p_{\theta^{(i)}}(\mathbf{z}|\mathbf{x}^{(i)})} 
       \cdot \frac{\prod_{i \neq m} p_{\theta^{(i)}}(\mathbf{z}|\mathbf{x}^{(i)})}{p_{\theta^{(m)}}(\mathbf{z})} 
       \right) d\mathbf{x}^{[M]}
       d\mathbf{z} ,
\end{align}
where $d\mathbf{x}^{[M]} =\, d\mathbf{x}^{(1)} \dots d\mathbf{x}^{(M)}$ for brevity, and  $p_{\theta^{(m)}}=\int p_{\theta^{(m)}}(\mathbf z|\mathbf{x}^{(m)})p(\mathbf{x}^{(m)})d\mathbf{x}^{(m)}$ is the marginal representation distribution induced by the $m$-th encoder, and $\int {\prod_{i \neq m} p_{\theta^{(i)}}(\mathbf{z}|\mathbf{x}^{(i)})} d\mathbf{z} \ne 1$, so $ {\prod_{i \neq m} p_{\theta^{(i)}}(\mathbf{z}|\mathbf{x}^{(i)})}$ cannot be regarded as a probability. Under this condition, we introduce the normalization constant:
\begin{align}
    C = \int {\prod_{i \neq m} p_{\theta^{(i)}}(\mathbf{u}|\mathbf{x}^{(i)})} d\mathbf{u},
\end{align}
and we denote by $p_{\theta^{[M]\backslash(m)}}(\mathbf{z}|\mathbf x^{[M]\backslash(m)}) = \frac{\prod_{i \neq m} p_{\theta^{(i)}}(\mathbf{z}|\mathbf{x}^{(i)})}{C}$. For the logarithm, we can have:
\begin{align}
    &\quad\ \log \left( 
       \frac{p_{\theta^{(m)}}(\mathbf{z}|\mathbf{x}^{(m)})}{\prod_{i \neq m} p_{\theta^{(i)}}(\mathbf{z}|\mathbf{x}^{(i)})} 
       \cdot \frac{\prod_{i \neq m} p_{\theta^{(i)}}(\mathbf{z}|\mathbf{x}^{(i)})}{p_{\theta^{(m)}}(\mathbf{z})} 
       \right) \\
       &= \log \left( 
       \frac{p_{\theta^{(m)}}(\mathbf{z}|\mathbf{x}^{(m)})}{\prod_{i \neq m} p_{\theta^{(i)}}(\mathbf{z}|\mathbf{x}^{(i)})} 
       \cdot \frac{\prod_{i \neq m} p_{\theta^{(i)}}(\mathbf{z}|\mathbf{x}^{(i)})}{p_{\theta^{(m)}}(\mathbf{z})}\cdot \frac{C}{C}  
       \right) \\  
       &=\log \left( 
       \frac{p_{\theta^{(m)}}(\mathbf{z}|\mathbf{x}^{(m)})}{\prod_{i \neq m} p_{\theta^{(i)}}(\mathbf{z}|\mathbf{x}^{(i)})/C} 
       \cdot \frac{\prod_{i \neq m} p_{\theta^{(i)}}(\mathbf{z}|\mathbf{x}^{(i)})/C}{p_{\theta^{(m)}}(\mathbf{z})} 
       \right) \\
       &=\log \left( 
       \frac{p_{\theta^{(m)}}(\mathbf{z}|\mathbf{x}^{(m)})}{p_{\theta^{[M]\backslash(m)}}(\mathbf{z}|\mathbf x^{[M]\backslash(m)})} 
       \cdot \frac{p_{\theta^{[M]\backslash(m)}}(\mathbf{z}|\mathbf x^{[M]\backslash(m)})}{p_{\theta^{(m)}}(\mathbf{z})} 
       \right) \\
       &=\log \left( 
       \frac{p_{\theta^{(m)}}(\mathbf{z}|\mathbf{x}^{(m)})}{p_{\theta^{[M]\backslash(m)}}(\mathbf{z}|\mathbf x^{[M]\backslash(m)})}\right) - \log \left(\frac{p_{\theta^{(m)}}(\mathbf{z})}{p_{\theta^{[M]\backslash(m)}}(\mathbf{z}|\mathbf x^{[M]\backslash(m)})}
       \right). 
\end{align}
Therefore, the mutual information becomes:
\begin{align}
 & \quad\  I(\mathbf{z}^{(m)}; \mathbf{x}^{(m)})\\ &= \underbrace{\int \dots \int}_{M}p(\mathbf{x}^{[M]}) \int 
        p_{\theta^{(m)}}(\mathbf{z}|\mathbf{x}^{(m)}) 
       \log \left( 
       \frac{p_{\theta^{(m)}}(\mathbf{z}|\mathbf{x}^{(m)})}{p_{\theta^{[M]\backslash(m)}}(\mathbf{z}|\mathbf x^{[M]\backslash(m)})}\right)d\mathbf{z} \, d\mathbf{x}^{[M]}\nonumber \\
    &\quad -\underbrace{\int \dots \int}_{M}  p(\mathbf{x}^{[M]})\int 
       p_{\theta^{(m)}}(\mathbf{z}|\mathbf{x}^{(m)}) 
      \log \left(\frac{p_{\theta^{(m)}}(\mathbf{z})}{p_{\theta^{[M]\backslash(m)}}(\mathbf{z}|\mathbf x^{[M]\backslash(m)})}
       \right)d\mathbf{z} \, d\mathbf{x}^{[M]} \\   
    &= \mathbb{E}_{p(\mathbf{x}^{[M]})} \left[ 
       D_{\mathrm{KL}} \left( p_{\theta^{(m)}}(\mathbf{z}|\mathbf{x}^{(m)}) ~\Big\|~ p_{\theta^{[M]\backslash(m)}}(\mathbf{z}|\mathbf x^{[M]\backslash(m)}) \right) \right] \nonumber \\
    &\quad -\underbrace{\int \dots \int}_{M}  \int p(\mathbf{x}^{[M]}) 
       p_{\theta^{(m)}}(\mathbf{z}|\mathbf{x}^{(m)}) 
      \log \left(\frac{p_{\theta^{(m)}}(\mathbf{z})}{p_{\theta^{[M]\backslash(m)}}(\mathbf{z}|\mathbf x^{[M]\backslash(m)})}
       \right)d\mathbf{z} \, d\mathbf{x}^{[M]},    
\end{align}

where $D_{\mathrm{KL}}(p(x)||q(x))=\int p(x) \log \frac{p(x)}{q(x)}dx \geq 0$. $p(\mathbf{x}^{[M]}) p_{\theta^{(m)}}(\mathbf{z}|\mathbf{x}^{(m)})$ defines a joint distribution over $(\mathbf{x}^{[M]}, \mathbf{z})$. Its marginal distribution over $\mathbf{z}$ is 
\begin{align}
    \int p(\mathbf{x}^{[M]}) p_{\theta^{(m)}}(\mathbf{z}|\mathbf{x}^{(m)})d\mathbf{x}^{[M]} = \int p(\mathbf{x}^{(m)}) p_{\theta^{(m)}}(\mathbf{z}|\mathbf{x}^{(m)}) = p_{\theta^{(m)}}(\mathbf{z}).
\end{align}
Therefore, by Bayes's rule, the induced conditional distribution over $\mathbf{x}^{[M]}$ given $\mathbf{z}$ is 
\begin{align}
    p_{\theta^{(m)}}(\mathbf{x}^{[M]}| \mathbf{z}) = \frac{p(\mathbf{x}^{[M]})p_{\theta^{(m)}}(\mathbf{z}|\mathbf{x}^{(m)})}{p_{\theta^{(m)}}(\mathbf{z})}.
\end{align}
Using this factorization, the mutual information can be rewritten as 
\begin{align}
    &\quad \ I(\mathbf{z}^{(m)}; \mathbf{x}^{(m)})\\ &= \mathbb{E}_{p( \mathbf{x}^{[M]})} \left[ 
       D_{\mathrm{KL}} \left( p_{\theta^{(m)}}(\mathbf{z}|\mathbf{x}^{(m)}) ~\Big\|~ p_{\theta^{[M]\backslash(m)}}(\mathbf{z}|\mathbf x^{[M]\backslash(m)}) \right) \right] \nonumber \\
    &\quad -\underbrace{\int \dots \int}_{M}  \int p_{\theta^{(m)}}(\mathbf{x}^{[M]}| \mathbf{z})p_{\theta^{(m)}}(\mathbf{z})
      \log \left(\frac{p_{\theta^{(m)}}(\mathbf{z})}{p_{\theta^{[M]\backslash(m)}}(\mathbf{z}|\mathbf x^{[M]\backslash(m)})}
       \right)d\mathbf{z} \, d\mathbf{x}^{[M]} \\ 
    &= \mathbb{E}_{p( \mathbf{x}^{[M]})} \left[ 
       D_{\mathrm{KL}} \left( p_{\theta^{(m)}}(\mathbf{z}|\mathbf{x}^{(m)}) ~\Big\|~ p_{\theta^{[M]\backslash(m)}}(\mathbf{z}|\mathbf x^{[M]\backslash(m)}) \right) \right] \nonumber \\
    &\quad -\underbrace{\int \dots \int}_{M} p_{\theta^{(m)}}(\mathbf{x}^{[M]}| \mathbf{z}) \int p_{\theta^{(m)}}(\mathbf{z})
      \log \left(\frac{p_{\theta^{(m)}}(\mathbf{z})}{p_{\theta^{[M]\backslash(m)}}(\mathbf{z}|\mathbf x^{[M]\backslash(m)})}
       \right)d\mathbf{z} \, d\mathbf{x}^{[M]} \\
    &= \mathbb{E}_{p(\mathbf{x}^{[M]})} \left[ 
       D_{\mathrm{KL}} \left( p_{\theta^{(m)}}(\mathbf{z}|\mathbf{x}^{(m)}) ~\Big\|~ p_{\theta^{[M]\backslash(m)}}(\mathbf{z}|\mathbf x^{[M]\backslash(m)}) \right) \right] \nonumber \\
    &\quad - \mathbb{E}_{p( \mathbf{x}^{[M]} | \mathbf{z})} \left[ 
       D_{\mathrm{KL}} \left( p_{\theta^{(m)}}(\mathbf{z}) ~\Big\|~ p_{\theta^{[M]\backslash(m)}}(\mathbf{z}|\mathbf x^{[M]\backslash(m)}) \right) \right] \\
    &\leq \mathbb{E}_{p( \mathbf{x}^{[M]})} \left[ 
       D_{\mathrm{KL}} \left( p_{\theta^{(m)}}(\mathbf{z}|\mathbf{x}^{(m)}) ~\Big\|~ p_{\theta^{[M]\backslash(m)}}(\mathbf{z}|\mathbf x^{[M]\backslash(m)}) \right) \right]. \label{eq:mi_final}
\end{align}
\end{proof}

\subsection{Proof of Proposition \ref{prop:gaussain_m_minimality}}
\label{sec:proof_gaussian_m_minimality}
\begin{proof}[Proof of Proposition \ref{prop:gaussain_m_minimality}]
Let $p_{\theta^{(m)}}(\mathbf{z}|\mathbf{x}^{(m)})=\mathcal{N}(\mathbf{z}; \mu^{(m)}, \sigma^2I)$ with $\mu^{(m)}=\mu_{\theta^{(m)}}(\mathbf{x}^{(m)})$, so $p_{\theta^{[M]\backslash(m)}}(\mathbf{z}|\mathbf x^{[M]\backslash(m)})$ also follows a Gaussian distribution, namely:
\begin{align}
    p_{\theta^{[M]\backslash(m)}}(\mathbf{z}|\mathbf x^{[M]\backslash(m)}) \sim \mathcal{N}(\mathbf{z;\bar{\mu}}, \frac{\sigma^2}{M-1}I),
\end{align}
where the mean $\bar{\mu}=\frac{1}{M-1}\sum_{i=1, i\ne m}^M \mu^{(i)}$ and the covariance $\frac{\sigma^2}{M-1}I$. Using the closed-form expression of the KL divergence between Gaussian distributions, we obtain:
  \begin{align}
      D_{\mathrm{KL}} \left( p_{\theta^{(m)}}(\mathbf{z}|\mathbf{x}^{(m)}) ~\Big\|~ p_{\theta^{[M]\backslash(m)}}(\mathbf{z}|\mathbf x^{[M]\backslash(m)}) \right) = \frac{M-1}{2\sigma^2}||\mu^{(m)}-\bar{\mu}||_2^2 + C_M,
  \end{align}
  where $C_M$ is a constant independent of the mean parameters. Therefore, up to constants, minimizing this KL divergence is equivalent to minimizing the squared distance between $\mu^{(m)}$ and the average of the means of the rest of the modalities $\bar{\mu}$, namely:
  \begin{align}
      D_{\mathrm{KL}} \left( p_{\theta^{(m)}}(\mathbf{z}|\mathbf{x}^{(m)}) ~\Big\|~ p_{\theta^{[M]\backslash(m)}}(\mathbf{z}|\mathbf x^{[M]\backslash(m)}) \right) \propto ||\mu_{\theta^{(m)}}(\mathbf{x}^{(m)})-\bar{\mu}||_2^2.
  \end{align}
Then
\begin{align}
&\mathbb{E}_{p(\mathbf{x}^{(1)},\dots,\mathbf{x}^{(M)})}\left[D_{\mathrm{KL}} \left( p_{\theta^{(m)}}(\mathbf{z}|\mathbf{x}^{(m)}) ~\Big\|~ p_{\theta^{[M]\backslash(m)}}(\mathbf{z}|\mathbf x^{[M]\backslash(m)})\right)\right]  \\\propto & \ \mathbb{E}_{p(\mathbf{x}^{(1)},\dots,\mathbf{x}^{(M)})}\left[||\mu_{\theta^{(m)}}(\mathbf{x}^{(m)})-\bar{\mu}||_2^2\right].
  \end{align}
\end{proof}

\section{Experimental Details}
\subsection{Dataset}
\label{sec:dataset_detail}
\textbf{Vision\&Touch.}
We use the Vision\&Touch dataset~\citep{lee2019making}, which provides synchronized modalities from multiple sensors, including third-person RGB images, force--torque histories, end-effector pose, depth, and optical flow. Using this dataset, we construct the following downstream tasks:

\emph{Vision--Force--Depth (\textsc{VFD}).}
This setting uses RGB images \([3 \times 128 \times 128]\), force--torque histories \([32 \times 6]\), and depth observations \([1 \times 128 \times 128]\). We evaluate two tasks: next-step end-effector orientation prediction, formulated as a 4-dimensional regression problem, and trajectory-pair classification, where the model predicts whether two modality triplets are sampled from the same trajectory.

\emph{Vision--Force--Proprioception (\textsc{VFP}).}
This setting uses RGB images, force--torque histories, and end-effector pose vectors \([7]\). The downstream task is binary contact prediction, where the model classifies whether the end-effector is in contact with the object.

\emph{Higher-modality trajectory classification.}
To evaluate OVA-IB beyond the standard three-modality setting, we construct four- and five-modality trajectory classification tasks from Vision \& Touch. The four-modality (VFPD) setting uses RGB, force--torque, proprioception, and depth, while the five-modality (VFPDO) setting additionally includes optical flow \([2 \times 128 \times 128]\). In both settings, the model predicts whether two modality tuples are sampled from the same trajectory.

\textbf{MuJoCo Push.}
We use the MuJoCo Push dataset \cite{10.1109/IROS45743.2020.9341579}, a planar pushing benchmark in which a Franka Emika Panda arm interacts with a puck. The modalities include grayscale image histories \([32 \times 32 \times 32]\), force--torque histories \([32 \times 6]\), and end-effector pose histories \([32 \times 7]\), where the leading dimension corresponds to temporal history. The task is to predict the next-step 2D position of the object on the table.

\textbf{CMU-MOSEI.}
We use the CMU-MOSEI benchmark preprocessed by MultiBench~\citep{liang2021multibench}, which contains visual \([35 \times 713]\), acoustic \([32 \times 74]\), and textual \([32 \times 300]\) modalities. Following the standard binary sentiment setting, the task is to classify whether the sentiment label is positive.

\textbf{WESAD.} We use WESAD~\cite{10.1145/3242969.3242985} to evaluate both cross-device physiological retrieval and downstream affective state classification. The dataset provides synchronized recordings from chest- and wrist-worn wearable devices. For retrieval, we construct the task \(\text{Resp}+\text{ECG}\rightarrow\text{BVP}\), where chest respiration ($[1\times3500]$) and ECG ($[1\times3500]$) retrieve the synchronized wrist BVP signal ($[1\times320]$). This task is physiologically grounded: ECG reflects cardiac electrical activity, respiration provides cardiopulmonary context, and BVP captures peripheral pulse dynamics at the wrist. It thus evaluates whether learned representations align central chest-side signals with peripheral wrist-side responses from the same time window. Additionally, we evaluate the learned representations on a downstream three-class affective classification task (baseline, stress, amusement).
\subsection{Experimental Implementation}
\label{sec:experimental_implementation}
We implement all methods using modality-specific encoders followed by projection heads. To ensure a fair comparison, OVA-IB and all baselines use the same backbone architectures, projection heads, downstream predictors, optimizer, batch size, temperature, and random seed whenever applicable. The only difference across methods is the pretraining objective.

\paragraph{Modality encoders.} For image-like modalities, we use ResNet-18 backbones \cite{he2015deepresiduallearningimage}. RGB inputs are processed by a standard ResNet-18 visual encoder. For depth observations, which are single-channel inputs, we duplicate the depth map along the channel dimension to form a 3-channel input and process it with a standard ResNet-18. For optical flow, we adapt the first convolutional layer of ResNet-18 to accept 2 input channels. For MuJoCo Push, the image history is treated as a 32-channel image input, and we adapt the first convolutional layer of ResNet-18 to accept 32 input channels while keeping the remaining architecture unchanged.

For force--torque histories, we use a one-dimensional convolutional encoder. The force history sequence is passed through 1D convolutional layers with ReLU activations, followed by adaptive average pooling and a final linear layer. This extracts temporal sensor patterns while producing a fixed-dimensional representation independent of the input sequence length. For low-dimensional proprioceptive inputs such as end-effector pose vectors, we use a lightweight MLP encoder with LeakyReLU activation. For CMU-MOSEI, the visual, acoustic, and textual modalities are provided as pre-extracted temporal feature sequences, and we use LSTM \cite{6795963} encoders to aggregate temporal information for each modality. For WESAD, we use a 3-layer 1D-CNN as the encoder.

All modality encoders output a 256-dimensional representation. Each representation is then passed through a modality-specific projection head using a 3-layer MLP, which maps it to a 256-dimensional embedding used for contrastive alignment and the OVA-IB objective.

\paragraph{Hyperparameters.} All models are optimized with Adam. We use a batch size of 64, temperature $\tau=0.01$, representation dimension 256, and embedding dimension 256 for all experiments. Learning rates are selected by dataset and downstream task: $10^{-3}$ for MuJoCo Push regression, $10^{-5}$ for CMU-MOSEI sentiment classification, $10^{-4}$ for VFD trajectory classification, $10^{-3}$ for VFD orientation regression, and $10^{-4}$ for the four- and five-modality VFPD/VFPDO trajectory classification tasks. For all tasks, we use random seed 42, and we also use 0 and 1 for the cross-modal retrieval. For our OVA-IB loss, we set $\lambda = 10^{-8}$.

\paragraph{Downstream evaluation.} After pretraining, we use the learned modality encoders to extract modality representations and concatenate the available modality embeddings for downstream prediction. The downstream predictor is a 4-layer MLP. For classification tasks, the predictor is trained with cross-entropy loss; for regression tasks, it is trained with mean squared error. For modality-agnostic evaluation, we reuse the encoders pre-trained by all modalities and evaluate them under different available modality combinations at test time.

\paragraph{Experimental Computer Resources}
Our experiments were run on an \textit{NVIDIA RTX 4090 GPU} with \textit{24 GB} memory and an \textit{Intel(R) Xeon(R) Platinum 8470Q CPU}.
\subsection{Modality-Agnostic Evaluation on Additional Tasks}
\label{sec:appendix-modality-agnostic experiment}
We report the full modality-combination evaluation results on the remaining benchmarks. 
Following Section \ref{sec:modality_agnostic_evaluation}, models are pretrained with all modalities and evaluated using different available modality combinations at test time. 
These tables (\ref{tab:mujoco_push}, \ref{tab:mosei_results}, \ref{tab:vfp_classification}, \ref{tab:vfd_error}, \ref{tab:four_modality_acc}, \ref{tab:five_modality_acc}) complement the main VFD classification results in Table \ref{tab:vfd_classification} and show that OVA-IB remains robust across both single-modality and multi-modality evaluation settings.

\begin{table}[H]
  \centering
  \caption{Modality-combination evaluation on MuJoCo Push regression.}
  \label{tab:mujoco_push}
 
  \begin{tabular}{lccccccc}
    \toprule
    Method & V & F & P & V+F & V+P & F+P & V+F+P \\
    \midrule
    Ours               & 0.0393 & 0.7107 & 0.3803 & 0.0707 & \textbf{0.0348} & 0.3753 & \textbf{0.0340} \\
    Symile             & 0.1584 & 0.6562 & 0.3808 & 0.2599 & 0.0910 & 0.3818 & 0.1228 \\
    Gram               & \textbf{0.0314} & 0.6677 & 0.3793 & 0.0722 & 0.0296 & \textbf{0.3859} & 0.0425 \\
    TRIANGLE           & 0.0459 & \textbf{0.7351} & 0.3795 & 0.0692 & 0.0379 & 0.3794 & 0.0656 \\
    Pairwise CLIP      & 0.0410 & 0.7084 & 0.3818 & \textbf{0.0686} & 0.0393 & 0.3744 & 0.0810 \\
    Pairwise CLIP + IB & 0.1290 & 0.7031 & 0.3803 & 0.1839 & 0.1775 & 0.3763 & 0.3385 \\
    \bottomrule
  \end{tabular}
\end{table}

\begin{table}[H]
  \centering
  \caption{Modality-combination evaluation on CMU-MOSEI sentiment classification.}
  \label{tab:mosei_results}
  
  \begin{tabular}{lccccccc}
    \toprule
    Method & V & A & T & V+A & V+T & A+T & V+A+T \\
    \midrule
    Ours               & 71.30 & 71.04 & 75.76 & 71.36 & 75.65 & 75.76 & 76.02 \\
    Symile             & 71.04 & 71.04 & 72.99 & 71.04 & 71.04 & 71.04 & \textbf{77.00} \\
    Gram               & \textbf{71.60} & 71.04 & 76.28 & 71.56 & 76.64 & 75.85 & 72.77 \\
    TRIANGLE           & 71.04 & 71.04 & \textbf{77.01} & 71.04 & 75.78 & \textbf{76.56} & 76.13 \\
    Pairwise CLIP      & 71.23 & \textbf{71.08} & 76.10 & \textbf{71.73} & 76.08 & 76.04 & 75.05 \\
    Pairwise CLIP + IB & 71.04 & 71.04 & 76.19 & 71.04 & \textbf{76.92} & 76.06 & 76.25 \\
    \bottomrule
  \end{tabular}
\end{table}

\begin{table}[H]
  \centering
  \caption{Modality-combination evaluation on VFP contact classification.}
  \label{tab:vfp_classification}
  \begin{tabular}{lccccccc}
    \toprule
    Method & V & F & P & V+F & V+P & F+P & V+F+P \\
    \midrule
    Ours               & 82.77 & 96.74 & \textbf{83.97} & 96.71 & \textbf{83.56} & 96.99 & \textbf{97.12} \\
    Symile             & \textbf{83.32} & \textbf{97.42} & 82.90 & 95.71 & 83.27 & 97.33 & 96.44 \\
    Gram               & 83.23 & 97.10 & 83.22 & 95.66 & 83.22 & 97.40 & 96.85 \\
    TRIANGLE           & 82.31 & 97.16 & 83.08 & 95.81 & 82.19 & 97.06 & 95.91 \\
    Pairwise CLIP      & 83.01 & 97.24 & 83.22 & 95.68 & 82.98 & 97.19 & 96.55 \\
    Pairwise CLIP + IB & 82.57 & 97.28 & 83.19 & \textbf{96.76} & 82.80 & \textbf{97.53} & 97.10 \\
    \bottomrule
  \end{tabular}
\end{table}

\begin{table}[H]
  \centering
  \caption{Modality-combination evaluation on VFD orientation regression.}
  \label{tab:vfd_error}
  \begin{tabular}{lccccccc}
    \toprule
    Method & V & F & D & V+F & V+D & F+D & V+F+D \\
    \midrule
    Ours               & \textbf{0.0002} & 0.0022 & \textbf{0.0003} & \textbf{0.0005} & \textbf{0.0002} & \textbf{0.0006} & \textbf{0.0003} \\
    Symile             & 0.0017 & 0.0022 & 0.0017 & 0.0017 & 0.0016 & 0.0017 & 0.0015 \\
    Gram               & 0.0020 & 0.0022 & 0.0023 & 0.0019 & 0.0018 & 0.0022 & 0.0018 \\
    TRIANGLE           & 0.0019 & 0.0022 & 0.0021 & 0.0020 & 0.0018 & 0.0021 & 0.0019 \\
    Pairwise CLIP      & 0.0017 & \textbf{0.0020} & 0.0017 & 0.0018 & 0.0014 & 0.0017 & 0.0013 \\
    Pairwise CLIP + IB & 0.0010 & 0.0022 & 0.0015 & 0.0012 & 0.0008 & 0.0016 & 0.0009 \\
    \bottomrule
  \end{tabular}
\end{table}

\begin{table}[H]
  \centering
  \caption{Modality-combination evaluation on four-modality VFPD trajectory classification.}
  \label{tab:four_modality_acc}
  \begin{tabular}{lccccc}
    \toprule
    Method & V & F & P & D & V+F+P+D \\
    \midrule
    Ours               & 91.14 & \textbf{94.56} & \textbf{89.38} & 90.20 & \textbf{94.73} \\
    Symile             & 83.29 & 93.01 & 88.72 & 82.91 & 84.97 \\
    Gram               & 90.14 & 89.93 & 87.67 & 89.70 & 90.81 \\
    Pairwise CLIP      & 92.94 & 91.54 & 88.55 & 91.13 & 93.25 \\
    Pairwise CLIP + IB & \textbf{93.19} & 91.71 & 88.66 & \textbf{91.67} & 94.45 \\
    \bottomrule
  \end{tabular}
\end{table}

\begin{table}[H]
  \centering
  \caption{Modality-combination evaluation on five-modality VFPDO trajectory classification.}
  \label{tab:five_modality_acc}
  \begin{tabular}{lcccccc}
    \toprule
    Method & V & F & P & D & O & V+F+P+D+O \\
    \midrule
    Ours               & 90.79 & \textbf{95.37} & \textbf{90.81} & 90.02 & 87.14 & \textbf{95.91} \\
    Symile             & 82.18 & 92.70 & 87.88 & 82.55 & 82.42 & 82.96 \\
    Gram               & 90.40 & 89.11 & 87.50 & 89.99 & 81.62 & 90.59 \\
    Pairwise CLIP      & 92.54 & 92.06 & 87.89 & 89.27 & 86.28 & 92.95 \\
    Pairwise CLIP + IB & \textbf{93.27}
 & 92.02 & 89.06 & \textbf{91.33} & \textbf{88.26}
 & 94.68 \\
    \bottomrule
  \end{tabular}
\end{table}

\begin{table}[H]
  \centering
  \caption{Modality-combination evaluation on the WESAD dataset.}
  \label{tab:wesad_modality_agnostic}
  \begin{tabular}{lccccccc}
    \toprule
    Method & ECG & Resp & BVP & E+R & E+B & R+B & ERB \\
    \midrule
    Ours               & 75.06 & 67.29 & 61.49 & 76.52 & 75.96 & 69.43 & \textbf{76.18} \\
    Symile             & 64.58 & 66.50 & 64.75 & 72.80 & 72.80 & 70.33 & 73.42 \\
    Gram               & 53.21 & \textbf{68.24} & 65.15 & 67.96 & 62.61 & 61.99 & 66.22 \\
    TRIANGLE           & 65.37 & 68.02 & 65.43 & 57.26 & 60.42 & 69.54 & 56.87 \\
    Pairwise CLIP      & 61.37 & 65.65 & 65.37 & 73.48 & 72.75 & 69.54 & 70.83 \\
    Pairwise CLIP + IB & \textbf{75.39} & 66.39 & \textbf{67.29} & \textbf{77.14} & \textbf{76.97} & \textbf{70.78} & 73.03 \\
    \bottomrule
  \end{tabular}
\end{table}

\end{document}